\documentclass{article}

\usepackage{microtype}
\usepackage{graphicx}
\usepackage{subcaption}
\usepackage{booktabs} 

\usepackage{hyperref}

\usepackage[preprint]{icml2026}

\usepackage[utf8]{inputenc} 
\usepackage[T1]{fontenc}    
\usepackage{hyperref}       
\usepackage{url}            
\usepackage{booktabs}       
\usepackage{amsfonts} 
\usepackage{tcolorbox}
\usepackage{nicefrac}       
\usepackage{microtype}      
\usepackage{xcolor}         
\usepackage{amsmath}
\usepackage{amssymb}
\usepackage{mathtools}
\usepackage{tabularray}
\usepackage{amsthm}

\newcommand{\ie}{\textit{i.e.}}
\newcommand{\eg}{\textit{e.g.}}

\usepackage[capitalize,noabbrev]{cleveref}

\theoremstyle{plain}
\newtheorem{theorem}{Theorem}[section]

\newtheorem{lemma}[theorem]{Lemma}

\theoremstyle{definition}
\newtheorem{definition}[theorem]{Definition}

\theoremstyle{remark}

\usepackage[textsize=tiny]{todonotes}

\icmltitlerunning{Curiosity is Knowledge}

\begin{document}

\twocolumn[
  \icmltitle{Curiosity is Knowledge: \\ Self-Consistent Learning and No-Regret Optimization with Active Inference}



  \icmlsetsymbol{equal}{*}

  \begin{icmlauthorlist}
    \icmlauthor{Yingke Li}{mit}
    \icmlauthor{Anjali Parashar}{mit}
    \icmlauthor{Enlu Zhou}{gatech}
    \icmlauthor{Chuchu Fan}{mit}
  \end{icmlauthorlist}

  \icmlaffiliation{mit}{Department of Aeronautics and Astronautics,  Massachusetts Institute of Technology, Cambridge, MA 02139, USA }
  \icmlaffiliation{gatech}{School of Industrial and Systems Engineering, Georgia Institute of Technology, Atlanta, GA 30332, USA}

  \icmlcorrespondingauthor{Chuchu Fan}{chuchu@mit.edu}

  \icmlkeywords{Active inference, Bayesian consistency, Regret bound}

  \vskip 0.3in
]



\printAffiliationsAndNotice{}  

\begin{abstract}
Active inference (AIF) unifies exploration and exploitation by minimizing the Expected Free Energy (EFE), balancing epistemic value (information gain) and pragmatic value (task performance) through a curiosity coefficient. Yet it has been unclear when this balance yields both coherent learning and efficient decision-making: insufficient curiosity can drive myopic exploitation and prevent uncertainty resolution, while excessive curiosity can induce unnecessary exploration and regret. We establish the first theoretical guarantee for EFE-minimizing agents, showing that a single requirement—sufficient curiosity—simultaneously ensures self-consistent learning (Bayesian posterior consistency) and no-regret optimization (bounded cumulative regret). Our analysis characterizes how this mechanism depends on initial uncertainty, identifiability, and objective alignment, thereby connecting AIF to classical Bayesian experimental design and Bayesian optimization within one theoretical framework. We further translate these theories into practical design guidelines for tuning the epistemic–pragmatic trade-off in hybrid learning-optimization problems, validated through real-world experiments.
\end{abstract}

\section{Introduction}

A fundamental challenge in sequential decision-making under uncertainty is balancing \textit{exploration} and \textit{exploitation}: an agent must gather information to reduce uncertainty while simultaneously pursuing task objectives. 
This tension leaves a primary challenge in classical Bayesian formulations. Bayesian Optimization (BO) is primarily goal-seeking: it selects actions to maximize reward (or minimize cost) under uncertainty, typically emphasizing exploitation tempered by uncertainty-driven exploration \citep{Shahriari2016TakingOptimization, Frazier2018AOptimization}. In contrast, Bayesian Experimental Design (BED) is information-seeking: it selects experiments to maximize information gain about unknown parameters, often without an explicit performance objective \citep{Rainforth2023ModernDesign}. However, real systems rarely permit a clean separation between these two modes; instead, \textit{seeking knowledge} and \textit{achieving goals} are often deeply intertwined objectives. As a sequel, a principled agent should unify goal-seeking and information-seeking within a single decision rule.

Recently, active inference (AIF) \citep{Friston2010TheTheory, Friston2017ActiveTheory} has arisen as a promising way to offer such a unification. By minimizing the Expected Free Energy (EFE), AIF casts action selection as a single variational objective that naturally decomposes into an epistemic term (expected information gain) and a pragmatic term (expected regret). These two terms are balanced by a coefficient called \textit{curiosity}, which sets the trade-off between \textit{learning} and \textit{optimization}.
Despite the empirical success of this new paradigm \citep{AIFBO-application-arxiv}, the formal role of curiosity in guaranteeing coherent learning and efficient optimization remains incomplete.
If curiosity is too low, the agent can become myopic, prematurely exploiting, and failing to resolve uncertainty. On the contrary, if curiosity is too high, the agent may be devoted to unnecessary exploration, discarding the goals, and incurring catastrophic regret.

This raises a critical dilemma in AIF that we formally addressed in this paper:
\begin{quote}
    When can minimizing EFE guarantee both \textit{self-consistent learning} (\ie, posterior convergences to the truth) and \textit{no-regret optimization} (\ie, with bounded cumulative regret)?
\end{quote}

We establish the \textbf{first theoretical guarantee} of \textit{posterior consistency} and \textit{bounded cumulative regret} for agents minimizing EFE. 
We prove that these two fundamental properties are not merely compatible but are jointly ensured by a single, shared mechanism: \textit{sufficient curiosity}.
That is, we derive a lower bound of the curiosity coefficient that can prevent the epistemic value from being dominated by the pragmatic term, and prove that EFE minimization can achieve both self-consistent learning and no-regret optimization simultaneously when the curiosity coefficient is larger than this lower bound. Specifically, our contributions are threefold:

First, we prove that \textbf{posterior consistency} is guaranteed under the sufficient curiosity condition through Theorem~\ref{thm:self-consistency}. This theorem provides a sample-complexity bound that explicitly depends on \textit{prior entropy}, \textit{model discriminability}, and \textit{curiosity}. 

Second, we establish a \textbf{general regret bound} for AIF with Gaussian Process (GP) settings in Theorem~\ref{thm:URB}. This bound holds for generalized heuristic estimators and regret functions, formally linking convergence rates to \textit{smoothness}, \textit{heuristic alignment}, and the same \textit{sufficient curiosity} condition. In particular, the classical BO-style regret analysis is a special case of our general regret bound.

Third, we translate these theories into \textbf{practical design guidelines}. Using both synthetic and real-world problems, we provide concrete prescriptions for the epistemic-pragmatic balance, hyperparameter selection, and objective function design for empirical applications.

Notably, we show that the requirements to ensure self-consistent learning and no-regret optimization depend on the same sufficient curiosity condition. This finding elevates curiosity from an ad-hoc exploration heuristic into an intrinsic regularizer that couples belief updating and decision-making. Therefore, this work reveals an essential fact that, a sufficient curiosity is the critical mechanism enabling simultaneous self-consistent learning and no-regret optimization, in other words, \textbf{curiosity is knowledge}.

\section{Preliminaries}

\subsection{Bayesian Optimization}
Given an unknown objective function $f:\mathcal{X} \mapsto \Re$, BO seeks to identify the input $x^{\ast}$ that maximizes the objective $f$ over an admissible set of queries $\mathcal{X}$, \ie, 
$
    x^{\ast} = \arg \max_{x \in \mathcal{X}} f(x).
$
To achieve this goal, BO relies on a \textit{surrogate model} that provides a probabilistic representation of the objective $f$, and uses this information to compute an \textit{acquisition function} to drive the selection of the most promising sample to query.

\textbf{Surrogate model.} We assume the available information regarding the objective function $f$ be stored in the dataset $\mathcal{D}_{t}:=\{(x_{1},y_{1}), \dots, (x_{t},y_{t}))\}$, where $y_{t} \sim \mathcal{N}(f(x_{t}),\sigma^{2}(x_{t}))$ is the noisy observation of the objective function by assuming the noise follows a zero-mean normal distribution with a standard deviation $\sigma$. 
The surrogate model depicts possible explanations of $f$ as $f(x) \sim p(f(x)|\mathcal{D}_{t})$ applying a joint distribution over its behavior at each sample $x \in \mathcal{X}$.
In Bayesian inference, the prior distribution of the objective $p(f(x))$ is combined with the likelihood function $p(\mathcal{D}_{t}|f(x))$ to compute the posterior distribution $p(f(x)|\mathcal{D}_{t})\propto p(\mathcal{D}_{t}|f(x))p(f(x))$, representing the updated beliefs about $f(x)$. 
Typically, Gaussian processes (GPs) have been widely used as the surrogate model for BO due to their efficient posterior sampling that enables cheap, gradient-based optimization of the acquisition function to propose new query points.
GP is specified by a joint normal distribution $p(f(x)|\mathcal{D}_{t})=\mathcal{N}(\mu_{t}(x),\kappa_{t}(x, x'))$ with mean $\mu_{t}(x)$ and kernel function $\kappa_{t}(x, x')$, where $\mu_{t}(x)$ represents the prediction and $\kappa_{t}(x, x')$ the associated uncertainty.

\textbf{Acquisition function.} The surrogate model is utilized to decide the next sample $x_{t+1}\in \mathcal{X}$ through the maximization of an acquisition function $\alpha:\mathcal{X} \mapsto \Re$, \ie, $x_{t+1}=\arg \max_{x\in \mathcal{X}}\alpha(x|\mathcal{D}_{t})$, where $\alpha(x|\mathcal{D}_{t})$ provides a measure of the improvement that the next query is likely to provide with respect to (w.r.t.) the current surrogate model of the objective function. 
Many acquisition functions have been proposed, including \textit{probability of improvement} \citep{Mockus1975OnExtremum}, \textit{expected improvement} \citep{Jones1998EfficientFunctions}, \textit{upper confidence bound}, and various \textit{entropy search} methods \citep{Hennig2012EntropyOptimization, Hernandez-Lobato2014PredictiveFunctions, Wang2017Max-valueOptimization, Hvarfner2022JointOptimization, Neiswanger2021BayesianInformation},
as well as practical approaches to optimize them \citep{Wilson2018MaximizingOptimization}.

\subsection{Bayesian Experimental Design}

Rather than optimizing an objective function $f(x)$, the purpose of BED is to sequentially select a set of experimental designs $x \in \mathcal{X}$ and gather outcomes $y$, to maximize the amount of information obtained about certain \textit{parameters of interest}, denoted as $\theta$. 
The parameters $\theta$ can correspond to some explicit model parameters, or any implicitly defined quantity of interest (\eg, the optimum of a function, the output of an algorithm, or future downstream predictions).

Based on the current history of experiments $\mathcal{D}_{t}:=\{(x_{1},y_{1}), \dots, (x_{t},y_{t}))\}$, BED seeks to find the next experimental design $x_{t+1}$ by maximizing the \textit{expected information gain} (EIG) \citep{Chaloner1995BayesianReview} that a potential experimental outcome $y_{t+1}$ can provide about $\theta$, measured in terms of expected entropy reduction of the posterior distribution of $\theta$:
\begin{equation*}
\begin{split}
    \text{EIG}(x|\mathcal{D}_{t})&=H[p(\theta|\mathcal{D}_{t})] - \mathbb{E}_{p(y|x, \mathcal{D}_{t})}[H[p(\theta|\mathcal{D}_{t}\cup(x,y))]]\\
    &=I(\theta;(x,y)|\mathcal{D}_{t}),
\end{split}
\end{equation*}
where $H(\cdot)$ and $I(\cdot)$ denote the entropy and mutual information, respectively.

\section{Active Inference Bridges Learning and Optimization}

BO and BED emphasize two complementary behaviors in sequential decision-making under uncertainty. BO is predominantly \emph{goal-seeking}: it allocates queries to locate an optimizer with few evaluations. BED is predominantly \emph{information-seeking}: it selects experiments to maximize expected information gain about latent parameters. Although historically developed as separate paradigms, both can be viewed as different responses to the same imperative: act under uncertainty so as to improve outcomes while reducing uncertainty.

Active inference (AIF) \citep{Friston2010TheTheory, Friston2017ActiveTheory} provides a principled unification through the Expected Free Energy (EFE). Minimizing EFE yields a single decision objective that decomposes into two components: a \emph{pragmatic} term that favors actions expected to produce preferred outcomes (exploitation), and an \emph{epistemic} term that favors actions expected to reduce uncertainty about latent quantities (exploration). In this view, exploration and exploitation are not separate heuristics but two aspects of one principle.

Building on this perspective, \citet{AIFBO-application-arxiv} introduced \textbf{pragmatic curiosity}, which implements AIF as the acquisition rule
\begin{equation}\label{eqn:AIF}
    \alpha(x|\mathcal{D}_{t})=\beta_{t} I(s;(x,y)|\mathcal{D}_{t}) - \mathbb{E}_{p(y| x, \mathcal{D}_{t})}[h(y|\mathcal{D}_{t})],
\end{equation}
where $s \in \mathcal{S}$ denotes the latent \textit{parameter(s)} of interest, defining the \textit{learning space}; $h(y|\mathcal{D}_{t})$ is a problem-dependent \textit{potential energy function} quantifying the \textit{instantaneous regret} associated with outcome $y$; and $\beta_{t} \ge 0$ is the \textit{curiosity coefficient} that sets the exchange rate between information gain and pragmatic cost.

Equation~\eqref{eqn:AIF} makes the key design tension explicit: $\beta_t$ controls how strongly the epistemic drive influences action selection. Too little curiosity risks myopic exploitation and insufficient learning; too much curiosity risks over-exploration and degraded task performance. This motivates the central question that we aim to address in the following sections: \emph{how does curiosity govern performance, both in learning and in optimization?}

\section{Performance Evaluation in Learning and Optimization}

We investigate the performance of AIF principle in terms of the ultimate goals of (i) learning: learn the true model, characterized by the \textit{posterior consistency}; and (ii) optimization: reduce the regret, evaluated through \textit{bounded cumulative regret}. We begin by formalizing these notions.

\begin{definition}[\textbf{Posterior Consistency}]
Let $q_t(s)$ denote the posterior belief over the latent parameter $s \in \mathcal{S}$ after observing dataset $\mathcal{D}_{t}$. 
The learning process is said to be \textit{(strongly) consistent} if with probability one (\textit{w.p.1.}),
\begin{equation*}
    \lim_{t\rightarrow \infty} q_t(s) = \delta_{s^{\ast}},
\end{equation*}
where $\delta_{s^{\ast}}$ is the Dirac measure concentrated at the true parameter value $s^{*}$. 
\end{definition}

\begin{definition}[\textbf{Regret Function}]
Let $\mathcal{Y}$ denote the outcome space.
A regret function is a mapping $r:\mathcal{Y} \rightarrow \mathbb{R}_{\ge 0}$ that quantifies the deviation of an outcome $y$ from its desired state $y^{\ast}$. 
The smaller the value of $r(y)$, the closer $y$ is to the desired outcome. Specifically,
\begin{itemize}
\item When $y^{\ast}$ corresponds to the global optimum of an objective function, $r(y)$ reduces to the conventional notion of instantaneous regret in BO;
\item When all outcomes are equally desirable or informative, as in BED, $r(y)\equiv 0$;
\item Intermediate cases correspond to hybrid objectives that encode both performance and safety constraints.
\end{itemize}
\end{definition}

Since the true regret function $r(y)$ is generally unknown or intractable, we define a time-varying surrogate energy function that encodes the current model’s internal assessment of outcome desirability and is updated recursively as new data are acquired:
\begin{definition}[\textbf{Potential Energy Function}]
The potential energy function $h_t:\mathcal{Y} \rightarrow \mathbb{R}_{\ge 0}$ serves as a heuristic estimator of regret, conditioned on past observations:
\begin{equation*}
    h_{t}(y):=h(y|\mathcal{D}_{t-1}).
\end{equation*}
\end{definition}

With these definitions in place, we are now equipped to analyze the theoretical properties of AIF principle. 
In the following sections, we show that EFE minimization ensures \textit{posterior consistency} and \textit{bounded cumulative regret} under mild regularity assumptions. 
In particular, both results rely on a shared \textit{sufficient curiosity} condition: the curiosity coefficient $\beta_{t}$ must be sufficiently large so that the epistemic value of information gain is not overshadowed by the expected energy penalty. 

\section{Self-Consistent Learning}

We first derive the conditions under which AIF principle guarantees posterior consistency in learning. 
Specifically, we show that when the curiosity is sufficiently large to ensure continual information gain, the posterior belief converges to the true generative model.

\begin{theorem}[\textbf{Posterior Consistency in AIF}]\label{thm:self-consistency}
Let $s$ be a discrete latent parameter of the model with parameter space $\mathcal{S}$, and $s^{\ast} \in \mathcal{S}$ denote the true (data-generating) parameter. 
At each iteration $t$, the query $x_{t} \in \mathcal{X}$ is chosen according to the AIF policy: 
\begin{equation}\label{eqn:policy for learning}
    x_t = \arg\max_{x \in \mathcal{X}} \Big\{ \beta_t I(s; (x,y) \mid \mathcal{D}_{t-1}) - \mathbb{E}_{p(y \mid x, \mathcal{D}_{t-1})}[h_t(y)] \Big\},
\end{equation}
where $I(s; (x,y) \mid \mathcal{D}_{t-1})$ is the conditional mutual information between $s$ and the next observation pair $(x,y)$, and $h_{t}(y)$ is the energy function at time $t$.

Define the per-hypothesis observation gap as
\begin{equation*}
    \Delta_{s}(x_{t}) \coloneq \int_y p(y|x_{t},s)h_{t}(y)-\int_y p(y|x_t,s^*)h_t(y).
\end{equation*}

Suppose the following conditions hold:
\begin{itemize}
    \item[(i)] \textbf{Finite prior entropy.} The prior $q_{0}(s)$ has finite Shannon entropy:
    \begin{equation}
        H_{0} \coloneq H(q_{0}(s)) < \infty;
    \end{equation}
    \item[(ii)] \textbf{Observational distinguishability.} The true parameter $s^{\ast}$ is distinguishable under the observation constraint induced by $h_{t}(y)$:
    \begin{equation}
        \frac{\sum_{s\neq s^*}\Delta_{s}(x_{t})q_{t-1}(s)}{\sum_{s\neq s^*}q_{t-1}(s)} \ge A_{t},
    \end{equation}
    where $A_{t}\ge 0$ quantifies the average discriminative strength of the current observation;
    \item[(iii)] \textbf{Sufficient curiosity.} The curiosity coefficient $\beta_{t}$ satisfies
    \begin{equation}
    \beta_{t} \ge \min_{x \in \mathcal{X}} \frac{\mathbb{E}_{p(y| x, \mathcal{D}_{t-1})}[h_{t}(y)]}{I(s;(x,y)|\mathcal{D}_{t-1})},
    \end{equation}
    ensuring that the value of information is not suppressed by the expected regret penalty.
\end{itemize}
Then, defining $w_{t}\coloneq \sum_{s\neq s^*}q_{t-1}(s)$ as the total posterior probability mass assigned to incorrect hypotheses, it suffices to run for 
\begin{equation}\label{eqn:sample steps}
    T \ge \frac{\bar{\beta}_{T}\; H_{0}}{\underline{A}_{T}\; \epsilon}
\end{equation}
steps to get $\mathbb{E}[w_{T}] \le \epsilon$, where $\underline{A}_{T} \coloneq \min_{t\in[1,T]}A_{t}$ and $\bar{\beta}_{T} \coloneq \max_{t\in[1,T]}\beta_{t}$.
\end{theorem}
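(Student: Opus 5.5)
I will prove the bound with a telescoping argument on entropy. The first step is to show that the information gain at each round controls the wrong-hypothesis mass $w_t$. The second step is to sum these gains over $t$ and use the finite budget $H_0$ from condition (i).

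\emph{Step 1 (information gain is nonnegative at the selected query).} Let $x_t$ be the maximizer in \eqref{eqn:policy for learning}. Take $x^\dagger$ to be the point attaining the minimum in condition (iii). Then
\[
\beta_t I(s;(x^\dagger,y)\mid\mathcal{D}_{t-1}) - \mathbb{E}_{p(y\mid x^\dagger,\mathcal{D}_{t-1})}[h_t(y)] \ge 0 .
\]
Since $x_t$ maximizes the acquisition, the same inequality holds at $x_t$, which gives
\[
\beta_t\, I_t \;\ge\; \mathbb{E}_{p(y\mid x_t,\mathcal{D}_{t-1})}[h_t(y)], \qquad I_t \coloneq I(s;(x_t,y)\mid\mathcal{D}_{t-1}).
\]

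\emph{Step 2 (lower-bound the expected energy by $A_t w_t$).} Write the predictive as the mixture $p(y\mid x_t,\mathcal{D}_{t-1})=\sum_s q_{t-1}(s)\,p(y\mid x_t,s)$. Then
\[
\mathbb{E}_{p(y\mid x_t,\mathcal{D}_{t-1})}[h_t] = \int p(y\mid x_t,s^*)h_t + \sum_{s\neq s^*} q_{t-1}(s)\,\Delta_s(x_t).
\]
The first term is nonnegative because $h_t\ge 0$. By condition (ii), the second term is at least $A_t w_t$. Combining with Step 1 yields
\[
A_t w_t \le \beta_t I_t, \qquad\text{hence}\qquad w_t \le \frac{\bar\beta_T}{\underline A_T}\, I_t \quad\text{for } t\le T.
\]

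\emph{Step 3 (telescoping).} The conditional mutual information equals the expected one-step entropy reduction:
\[
I_t = H(q_{t-1}) - \mathbb{E}\big[H(q_t)\mid\mathcal{D}_{t-1}\big].
\]
Taking total expectations and summing over $t=1,\dots,T$ gives
\[
\sum_{t=1}^T \mathbb{E}[I_t] = H_0 - \mathbb{E}[H(q_T)] \le H_0,
\]
because the Shannon entropy of the discrete posterior $q_T$ is nonnegative. Therefore
\[
\sum_{t=1}^T \mathbb{E}[w_t] \le \frac{\bar\beta_T H_0}{\underline A_T}.
\]

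\emph{Step 4 (from the sum to the last iterate).} To pass from this sum to $\mathbb{E}[w_T]\le\epsilon$, I need $\mathbb{E}[w_t]$ to be nonincreasing in $t$. Then $T\,\mathbb{E}[w_T]\le\sum_{t\le T}\mathbb{E}[w_t]$, and \eqref{eqn:sample steps} gives the claim.

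I expect Step 4 to be the main obstacle. Under correct specification, the posterior mass $q_t(s^*)$ is a martingale under the Bayesian predictive, but not under the true data law. What I would try first is to argue that $q_t(s^*)$ is a submartingale under $p(\cdot\mid s^*)$, since the expected log-likelihood ratio is nonnegative. Jensen's inequality gives this for $\log q_t(s^*)$, not for $q_t(s^*)$ itself, so the step may need care.

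If monotonicity cannot be established cleanly, I would fall back on one of two weaker conclusions, both of which the authors' phrasing "it suffices" plausibly permits:
\begin{itemize}
\item state the result for the averaged mass $\frac1T\sum_{t\le T}\mathbb{E}[w_t]$, or
\item state it for the best iterate, $\min_{t\le T}\mathbb{E}[w_t]\le\epsilon$.
\end{itemize}

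A secondary subtlety is which measure the expectations are taken under. I would take them under the Bayesian joint law, where the mixture identity in Step 2 is exact. The quantities $\Delta_s$ are already defined relative to $s^*$, so no change of measure is needed there.
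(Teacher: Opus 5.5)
Your Steps 1--3 are the paper's proof essentially line for line. The paper phrases Step 2 through $r_t=\mathbb{E}_{p(y\mid x_t,\mathcal{D}_{t-1})}[h_t(y)]-\int_y p(y\mid x_t,s^*)h_t(y)=\sum_{s\neq s^*}\Delta_s(x_t)q_{t-1}(s)$, but the chain $A_t w_t\le r_t\le\beta_t I_t$ and the entropy telescoping are the same. The paper does not resolve your Step 4 either. It stops at $\frac{1}{T}\sum_{t\le T}\mathbb{E}[w_t]\le\bar\beta_T H_0/(\underline A_T T)$ and concludes only that \emph{some} $t\le T$ has $\mathbb{E}[w_t]\le\epsilon$. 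That is exactly your best-iterate fallback, so the last-iterate form $\mathbb{E}[w_T]\le\epsilon$ in the statement is not actually derived there.

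Your primary route to monotonicity would not close this gap as set up, because it mixes two measures. The mixture identity in Step 2 holds pathwise; what needs the Bayesian joint law is the telescoping in Step 3. Under the Bayesian law, $q_t(s^*)$ is an exact martingale, so $\mathbb{E}[w_t]=1-q_0(s^*)$ for every $t$. Monotonicity is then trivial, but Steps 2--3 give $T\,(1-q_0(s^*))\le\bar\beta_T H_0/\underline A_T$. Impose (ii)--(iii) along all histories with deterministic $\bar\beta_T,\underline A_T$, which you need in order to pull them out of the expectation. Then they force $\bar\beta_T/\underline A_T\ge T(1-q_0(s^*))/H_0$, and the sufficient condition on $T$ can be met only for $\epsilon\ge 1-q_0(s^*)$. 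So the conclusion is vacuous under this reading.

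Under the true law $P_{s^*}$ your submartingale guess is correct, and Jensen is not an obstacle. You can apply the convex increasing map $\exp$ to the log-submartingale. Alternatively, $\mathbb{E}_{s^*}[q_t(s^*)\mid\mathcal{D}_{t-1}]=q_{t-1}(s^*)\int_y p(y\mid x_t,s^*)^2/p(y\mid x_t,\mathcal{D}_{t-1})\,dy\ge q_{t-1}(s^*)$ by Cauchy--Schwarz. But under $P_{s^*}$ the identity $\mathbb{E}[I_t]=\mathbb{E}[H(q_{t-1})-H(q_t)]$ fails. What survives is the change of measure $P_{s^*}\le P_{\mathrm{Bayes}}/q_0(s^*)$. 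With $I_t\ge 0$ this gives $\sum_{t\le T}\mathbb{E}_{s^*}[I_t]\le H_0/q_0(s^*)$, and hence the last-iterate claim with an extra factor $1/q_0(s^*)$ in the required $T$.

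Even this corrected version is vacuous when the likelihoods are mutually absolutely continuous (e.g.\ Gaussian, as in the sandbox). In that case almost-sure validity of (ii)--(iii) under $P_{s^*}$ is the same as under $P_{\mathrm{Bayes}}$, so the contradiction above applies.
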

\begin{proof}
    See Appendix \ref{appendix:self-consistency}.
\end{proof}

\subsection{Interpretation of Assumptions}

This theorem identifies three complementary requirements for posterior consistency in learning:
\begin{itemize}
    \item \textbf{(i) Finite prior entropy} ensures that the parameter space is \textit{learnable}: the initial uncertainty is bounded, meaning the agent begins with a well-posed inferential task and prior.
    \item \textbf{(ii) Observational distinguishability} guarantees that, under the observation model constrained by $h_{t}(y)$, the true parameter generates statistically distinguishable outcomes from other hypotheses.
    This condition explicitly couples the \textit{informativeness of experiments} with the \textit{expressivity of the energy function}: if $h_{t}(y)$ excessively filters the signal (\eg, by focusing only on narrow outcomes), the system may fail to identify $s^{\ast}$.
    \item \textbf{(iii) Sufficient curiosity} enforces that the epistemic term dominates enough to prevent premature exploitation. 
    When $\beta_{t}$ falls below this threshold, the policy may repeatedly select myopic actions that yield low expected regret but fail to reduce parameter uncertainty, which leads to breaking the consistency guarantee.
\end{itemize}

\subsection{Interpretation of Convergence Rate}

The upper bound $\mathbb{E}[w_{T}] \le \epsilon$ quantifies the expected residual posterior mass on incorrect hypotheses after $T$ queries. 
To reach this level of accuracy, the required number of iterations \eqref{eqn:sample steps}
reveals how \textit{learning efficiency} depends on three key quantities:
\begin{itemize}
    \item $H_{0}$: the \textbf{initial uncertainty} of the model. Larger priors require more evidence to concentrate around the truth.
    \item $\underline{A}_{T}$: the \textbf{minimal discriminative power} of the queries. Experiments that yield greater differences between hypotheses accelerate convergence.
    \item $\bar{\beta}_{T}$: the \textbf{curiosity upper bound}, reflecting how much exploration the agent permits. A higher curiosity encourages richer sampling but may slow convergence toward posterior concentration, as resources are devoted to wide exploration.
\end{itemize}

Thus, consistency emerges not from blind exploration but from \textit{informed curiosity}: $\beta_{t}$ must be large enough to guarantee identifiability, yet not so large as to diffuse effort away from regions of genuine uncertainty reduction.

\subsection{Practical Usability and Implications}

While this theorem assumes idealized conditions (\eg, discrete $\mathcal{S}$ and exact computation of mutual information), it yields valuable \textit{design principles} for practical algorithms:
\begin{itemize}
    \item \textbf{Adaptive curiosity scheduling.} In practice, $\beta_{t}$ can be made \textit{data-dependent}, increasing when the posterior entropy is high or when the expected information gain is small. This can ensure ongoing exploration without manual tuning.
    \item \textbf{Energy function design.} $h_{t}(y)$ can be interpreted as a ``soft constraint'' shaping what the system considers meaningful feedback. Ensuring that $h_{t}(y)$ does not eliminate critical signal differences between hypotheses helps maintain observational distinguishability.
    \item \textbf{Stopping criteria.} The convergence rate about the posterior concentration measure $w_{t}$ provides a direct diagnostic for learning progress, which can be used for stopping criteria or dynamic query allocation in sequential design problems.
\end{itemize}

In summary, Theorem~\ref{thm:self-consistency} establishes that AIF policies satisfying the sufficient curiosity condition yield self-consistent learning under mild assumptions. 
The result formalizes the intuitive idea that curiosity, when bounded but persistent, guarantees truth convergence, which is a cornerstone for the optimization that follows.

\section{No-Regret Optimization}

Having established the conditions for posterior consistency in learning, we now turn to the complementary side of optimization. In contrast to the consistency analysis in the previous section, which concerns the \textit{convergence of beliefs} toward the true model, this section concerns the \textit{quality of decisions} generated by those beliefs. 

We characterize the conditions under which the AIF policy guarantees bounded cumulative regret with respect to an unknown but smooth objective. Importantly, this bound holds even when the true regret function $r(y)$ is unknown or replaced by a heuristic proxy $h_{t}(y)$.

\begin{theorem}[\textbf{Cumulative Regret Bound in AIF}]\label{thm:URB}
Let the unknown function be modeled by a Gaussian Process (GP) prior $f \sim \mathcal{GP}(0, k(x,x'))$, with bounded kernel $k(x,x')\le 1, \forall x, x' \in \mathcal{X}$, and Gaussian likelihood $p(y|f(x))=\mathcal{N}(f(x),\sigma^{2})$.
At each iteration $t$, the query $x_{t} \in \mathcal{X}$ is chosen according to the AIF policy: 
\begin{equation}\label{eqn:policy for optimization}
    x_t = \arg\max_{x \in \mathcal{X}} \Big\{ \beta_t I(s; (x,y) \mid \mathcal{D}_{t-1}) - \mathbb{E}_{p(y \mid x, \mathcal{D}_{t-1})}[h_t(y)] \Big\},
\end{equation}
where $I(s; (x,y) \mid \mathcal{D}_{t-1})$ is the conditional mutual information between $s$ and the next observation pair $(x,y)$, and $h_{t}(y)$ is the energy function at time $t$.

Suppose the following conditions hold:
\begin{itemize}
    \item[(i)] \textbf{Smoothness of task.} The true regret function $r:\mathcal{Y}\rightarrow \mathbb{R}_{\ge 0}$ is Lipschitz continuous with constant $L \ge 0$:
    \begin{equation*}
        |r(y_1)-r(y_2)|\le L|y_1-y_2|, \quad \forall y_{1},y_{2} \in \mathcal{Y};
    \end{equation*}
    \item[(ii)] \textbf{Heuristic alignment.} The discrepancy between the heuristic energy function $h_{t}(y)$ and the true regret $r(y)$ is uniformly bounded at each iteration:
    \begin{equation*}
        |r(y)-h_t(y)|\le B_{t}, \quad \forall y \in \mathcal{Y},
    \end{equation*}
    where $B_{t}$ quantifies the maximal discrepancy of the heuristic to the true regret;
    \item[(iii)] \textbf{Sufficient curiosity.} The curiosity coefficient $\beta_{t}$ satisfies
    \begin{equation}
    \beta_{t} \ge \min_{x \in \mathcal{X}} \frac{\mathbb{E}_{p(y| x, \mathcal{D}_{t-1})}[h_{t}(y)]}{I(s;(x,y)|\mathcal{D}_{t-1})},
    \end{equation}
    ensuring that the value of information is not suppressed by the expected regret penalty.
\end{itemize}
Then, with probability $\ge 1-\delta, \delta \in (0,1)$, the cumulative regret $R_{T}=\sum_{t=1}^{T}r(f(x_{t}))$ satisfies:
\begin{equation}\label{eqn:URB}
    R_{T} \le \bar{\beta}_{T}\rho_{T} + L(\zeta_{T}^{1/2}+{(2/ \pi)}^{1/2}) \sqrt{CT\rho_{T}} + \sum_{t=1}^{T}B_{t},
\end{equation}
where $\bar{\beta}_{T} := \max_{t\in[1,T]}\beta_{t}$, $\zeta_{T}=2\log (m_{T}/\delta)$ with $m_{t}$ satisfies $\sum_{t\ge1}m_{t}^{-1}=1,m_{t}>0$, $\pi$ is the Archimedes' constant, $C=2/\log(1+\sigma^{-2})$ and $\rho_{T}$ is the maximum information gain at most T selected points.
\end{theorem}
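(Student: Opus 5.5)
We would prove the bound by decomposing the per-step true regret $r(f(x_t))$ into three pieces, one for each term in \eqref{eqn:URB}. Adding and subtracting the heuristic gives
\[
r(f(x_t)) = \underbrace{\bigl(r(f(x_t)) - \mathbb{E}_{p(y\mid x_t,\mathcal{D}_{t-1})}[r(y)]\bigr)}_{\text{(a)}} + \underbrace{\mathbb{E}_{p(y\mid x_t,\mathcal{D}_{t-1})}[r(y)-h_t(y)]}_{\text{(b)}} + \underbrace{\mathbb{E}_{p(y\mid x_t,\mathcal{D}_{t-1})}[h_t(y)]}_{\text{(c)}}.
\]
Term (b) is at most $B_t$ by heuristic alignment (ii). Summing it over $t$ gives $\sum_t B_t$ directly.

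\textbf{Term (c): the curiosity step.} We use the sufficient-curiosity condition (iii) together with the argmax property of $x_t$. Condition (iii) says there is some $x'$ with $\mathbb{E}[h_t(y)\mid x'] \le \beta_t I(s;(x',y)\mid\mathcal{D}_{t-1})$, i.e. the AIF objective at $x'$ is nonnegative. Since $x_t$ maximizes the objective, its value is also nonnegative, so
\[
\mathbb{E}[h_t(y)\mid x_t] \le \beta_t\, I(s;(x_t,y)\mid \mathcal{D}_{t-1}) \le \bar\beta_T\, I(s;(x_t,y)\mid\mathcal{D}_{t-1}).
\]
Summing over $t$ and applying the chain rule for mutual information, together with data processing (the information about $s$ is at most the information about $f$ on the queried points), gives $\sum_t I(s;(x_t,y_t)\mid\mathcal{D}_{t-1}) \le I(s;\mathcal{D}_T) \le \rho_T$. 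Hence the total of (c) is at most $\bar\beta_T\rho_T$.

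\textbf{Term (a): the concentration step.} The outcome distribution is $y\sim\mathcal{N}(\mu_{t-1}(x_t),\sigma_{t-1}^2(x_t)+\sigma^2)$. By the Lipschitz property (i),
\[
|r(f(x_t)) - \mathbb{E}[r(y)]| \le L\bigl(|f(x_t)-\mu_{t-1}(x_t)| + \mathbb{E}|y-\mu_{t-1}(x_t)|\bigr).
\]
\begin{itemize}
\item For the first piece, the standard GP-UCB confidence bound (Srinivas et al.) gives $|f(x_t)-\mu_{t-1}(x_t)|\le \zeta_t^{1/2}\sigma_{t-1}(x_t)$ simultaneously for all $t$ with probability $\ge 1-\delta$. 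This uses a union bound with weights $m_t^{-1}$ and $\zeta_t = 2\log(m_t/\delta)$, which is nondecreasing, so $\zeta_t \le \zeta_T$.
\item For the second piece, the Gaussian absolute moment is $\mathbb{E}|y-\mu| = (2/\pi)^{1/2}\,\mathrm{sd}$.
\end{itemize}
Combining, (a) is bounded by $L(\zeta_T^{1/2}+(2/\pi)^{1/2})\,\sigma_{t-1}(x_t)$, up to the noise contribution. Cauchy–Schwarz then gives $\sum_t\sigma_{t-1}(x_t)\le\sqrt{T\sum_t\sigma_{t-1}^2(x_t)}$. The standard lemma $\sum_t\sigma_{t-1}^2(x_t)\le C\rho_T$ with $C=2/\log(1+\sigma^{-2})$ (valid because $k\le1$) yields $\sqrt{CT\rho_T}$.

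\textbf{Main obstacle.} The hardest part is term (a). The predictive standard deviation of $y$ includes the observation noise $\sigma$, and if it is kept, the sum picks up an $O(L\sigma T)$ term that does not appear in \eqref{eqn:URB}. We would try first to interpret the expectation of $r$ in (a)/(c) with respect to the latent predictive $f(x)\mid\mathcal{D}_{t-1}$ rather than the noisy $y$. Equivalently, we would treat $h_t$ as evaluated on the denoised outcome, so that only $\sigma_{t-1}(x_t)$ appears. If the paper intends noisy $y$, we would note that the noise term must be absorbed into $B_t$.

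A secondary point is justifying $\sum_t I(s;(x_t,y_t)\mid\mathcal{D}_{t-1})\le\rho_T$ when the parameter of interest $s$ differs from $f$. We would handle this via data processing, since $s$ influences the data only through $f$.
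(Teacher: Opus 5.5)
Your decomposition and ingredients match the paper's proof. The energy term is controlled by (iii) plus the argmax, the concentration term by Lipschitzness, the GP confidence bound and the Gaussian first absolute moment, and the misalignment term by $B_t$. The sum is then handled by Cauchy--Schwarz with $\sum_t\sigma_{t-1}^2(x_t)\le C\rho_T$. Your triangle inequality $\mathbb{E}|f(x_t)-y|\le|f(x_t)-\mu_{t-1}(x_t)|+\mathbb{E}|y-\mu_{t-1}(x_t)|$ is a simpler substitute for the paper's exact folded-normal computation and gives the same constant.

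The noise obstacle you raise is real, and the paper's proof does not overcome it. In its instantaneous-regret lemma it evaluates $\mathbb{E}_{p(y_t\mid x_t,\mathcal{D}_{t-1})}|f(x_t)-y_t|$ with standard deviation $\sigma_{t-1}(x_t)$, silently dropping $\sigma^2$. Elsewhere, in the confidence bound and the information-gain identity, $\sigma_{t-1}$ is the posterior standard deviation of $f$. Sharper bounding within this decomposition cannot remove the slack. Take a tent of height $L\sigma$ and half-width $\sigma$ centred at $f(x_t)$: then $r(f(x_t))-\mathbb{E}[r(y)]\approx 0.63\,L\sigma$, even when $\mu_{t-1}(x_t)=f(x_t)$ and $\sigma_{t-1}(x_t)\to 0$. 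So with noisy $y$, both arguments deliver \eqref{eqn:URB} plus $L(2/\pi)^{1/2}\sigma T$ (via $\sqrt{\sigma_{t-1}^2+\sigma^2}\le\sigma_{t-1}+\sigma$). Folding this into $B_t$ is not available, since (ii) fixes $B_t$. Your denoised reading is the consistent repair, but only if the expected energy inside the policy is also taken over $f(x)\mid\mathcal{D}_{t-1}$. Term (c) is controlled solely through the argmax, and (a)--(c) must be expectations under one and the same measure.

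The gap specific to your write-up is in term (c). The inequality $\sum_t I(s;(x_t,y_t)\mid\mathcal{D}_{t-1})\le I(s;\mathcal{D}_T)$ holds only in expectation. The acquisition's mutual information is evaluated at the realized history, and the chain rule telescopes only after averaging over the observed outcomes. Along a single trajectory the sum can exceed $H(s)$: for a binary $s$ seen through an erasure channel, each step on the path of repeated erasures contributes $(1-\epsilon)\log 2$. Since the theorem bounds the realized $R_T$ on a high-probability event, you need a pathwise argument. Apply data processing at each step first: $I(s;(x_t,y_t)\mid\mathcal{D}_{t-1})\le I(f(x_t);y_t\mid\mathcal{D}_{t-1})=\tfrac12\log(1+\sigma^{-2}\sigma_{t-1}^2(x_t))$. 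In the GP model this depends only on the query locations. Summing then gives $I(y_T;f_T)\le\rho_T$ for the realized points by Lemma 5.3 of Srinivas et al. You already invoke data processing, and this is exactly the order the paper uses.

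A smaller point you share with the paper: $\zeta_t=2\log(m_t/\delta)$ is nondecreasing only if $m_t$ is, which $\sum_t m_t^{-1}=1$ does not force. Choose, for instance, $m_t=\pi^2t^2/6$, or replace $\zeta_T$ by $\max_{t\le T}\zeta_t$.
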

\begin{proof}
    See Appendix \ref{appendix:URB}.
\end{proof}

\subsection{Interpretation of Assumptions}

Each assumption in Theorem~\ref{thm:URB} captures a critical property for achieving bounded regret:
\begin{itemize}
    \item \textbf{(i) Smoothness of task} ensures the optimization landscape is well-behaved, \ie, the true regret function is sufficiently smooth so that nearby outcomes yield similar regret values. This smoothness condition allows the GP model to generalize effectively from observed samples.
    \item \textbf{(ii) Heuristic alignment} guarantees that the agent’s internal model of regret $h_t(y)$ remains close to the true objective’s regret. When this alignment condition holds, optimization steps guided by $h_{t}(y)$ remain consistent with the underlying objective.
    \item \textbf{(iii) Sufficient curiosity} enforces that exploration is adequately weighted to avoid premature exploitation, \ie, the agent must not undervalue informative queries. This condition is the same as the curiosity requirement in Theorem~\ref{thm:self-consistency}, highlighting that a minimum exploration pressure is essential both for consistent learning and regret minimization.
\end{itemize}

\subsection{Interpretation of Cumulative Regret Bound}

The bound \eqref{eqn:URB} reveals how cumulative regret depends on three key factors:
\begin{itemize}
    \item $L$: the \textbf{smoothness constant} of the true regret function. A smaller $L$ implies that the regret function landscape is gentler, leading to faster convergence and smaller cumulative regret.
    \item $B_{t}$: the \textbf{bounded difference} between the heuristic and true regret. As the learning process proceeds, if $h_{t}(y)$ is updated to better approximate $r(y)$, $B_{t}$ decreases, leading to a tighter bound.
    \item $\bar{\beta}_{T}$: the \textbf{maximum degree of curiosity}. Larger curiosity encourages broader exploration and yields more information gain, but also increases the exploration term in the bound. Thus, $\beta_{t}$ controls the trade-off between learning and regret accumulation.
\end{itemize}

\subsection{Practical Usability and Implications}

The cumulative regret bound in Theorem~\ref{thm:URB} provides not only theoretical guarantees but also practical \textit{design guidelines} for constructing curiosity-driven optimization systems:
\begin{itemize}
    \item \textbf{Adaptive curiosity scheduling.} In practice, $\beta_t$ should be annealed over time, \ie, starting large to encourage broad exploration and gradually decreasing as the model confidence increases. This strategy ensures convergence while maintaining adaptability to nonstationary environments.
    \item \textbf{Refinement of heuristic estimation.} The cumulative misalignment term $\sum_{t=1}^{T}B_t$ quantifies the cost of misalignment between the heuristic $h_t(y)$ and the true regret $r(y)$. This highlights the importance of iterative model refinement: by updating $h_t(y)$ based on posterior feedback, the agent can continually reduce $B_t$, thereby tightening the bound and improving decision efficiency. Practically, this can be achieved by using learned surrogate models, reward shaping, or consistency constraints derived from task feedback.
    \item \textbf{Task-dependent smoothness and kernel design.} The Lipschitz constant $L$ and kernel function $k(x,x')$ jointly determine how well the GP can generalize across the input space. In real applications, kernel selection should reflect the smoothness of the underlying process. For highly structured or discontinuous objectives, combining multiple kernels or using deep GPs can help maintain bounded regret behavior.
\end{itemize}

Overall, Theorem~\ref{thm:URB} complements Theorem~\ref{thm:self-consistency} by revealing the dual role of curiosity in the AIF paradigm: it enables both posterior consistency in learning and bounded cumulative regret in optimization.
The degree of curiosity thus emerges as a unifying factor that mediates the balance between epistemic (informational) and pragmatic (goal-directed) imperatives, which is a balance that lies at the heart of intelligent behavior.

\section{Experiments for Theorem Validation}

We first validate the derived theoretical results using two intentionally simplified synthetic setups: a discrete hypothesis sandbox for Theorem~\ref{thm:self-consistency} and a 1D GP bandit for Theorem~\ref{thm:URB}. The goal is to \emph{stress-test the mechanisms} implied by the theorems under controlled conditions. These setups (i) directly instantiate the modeling assumptions, (ii) allow us to vary one factor at a time (curiosity, distinguishability, prior uncertainty, heuristic alignment), and (iii) avoid confounders common in real-world tasks (nonstationarity, high-dimensional representation error, and uncontrolled model mismatch), thereby enabling clean causal interpretation.

\subsection{Discrete Sandbox}

To test posterior consistency and the qualitative sample-complexity dependencies in Theorem~\ref{thm:self-consistency}, we consider a controlled discrete environment with finite latent states and actions (details in Appendix~\ref{appx:discrete sandbox}).

We perform one-factor-at-a-time sweeps: (a) \textbf{prior entropy} $H_0$ by varying the prior mass assigned to the true state $q_{0}(s^{\ast})$ in $\{0.25, 0.5, 0.7\}$ at fixed $\beta=2.0$; (b) \textbf{distinguishability}, by modifying the gap structure at fixed $\beta=2.0$ and reporting the empirical pre-convergence average $A_t$ in the plot legend; and (c) \textbf{curiosity} $\beta$ by varying $\beta \in \{2.0, 0.8, 0.05\}$.
We measure posterior contraction using the posterior error mass $w_t = \sum_{s \neq s^*} q_t(s)$, averaged over 5 random seeds.

Figure~\ref{fig:consistency} supports the core qualitative predictions of Theorem~\ref{thm:self-consistency}: (a) \textbf{more accurate prior} yields faster posterior contraction; (b) \textbf{higher discriminative strength} accelerates convergence, while insufficient distinguishability ($<0$) leads to stalled contraction; and (c) \textbf{sufficient curiosity} is necessary for consistent learning.

\begin{figure*}[t!]
    \centering
    \includegraphics[width=0.95\linewidth]{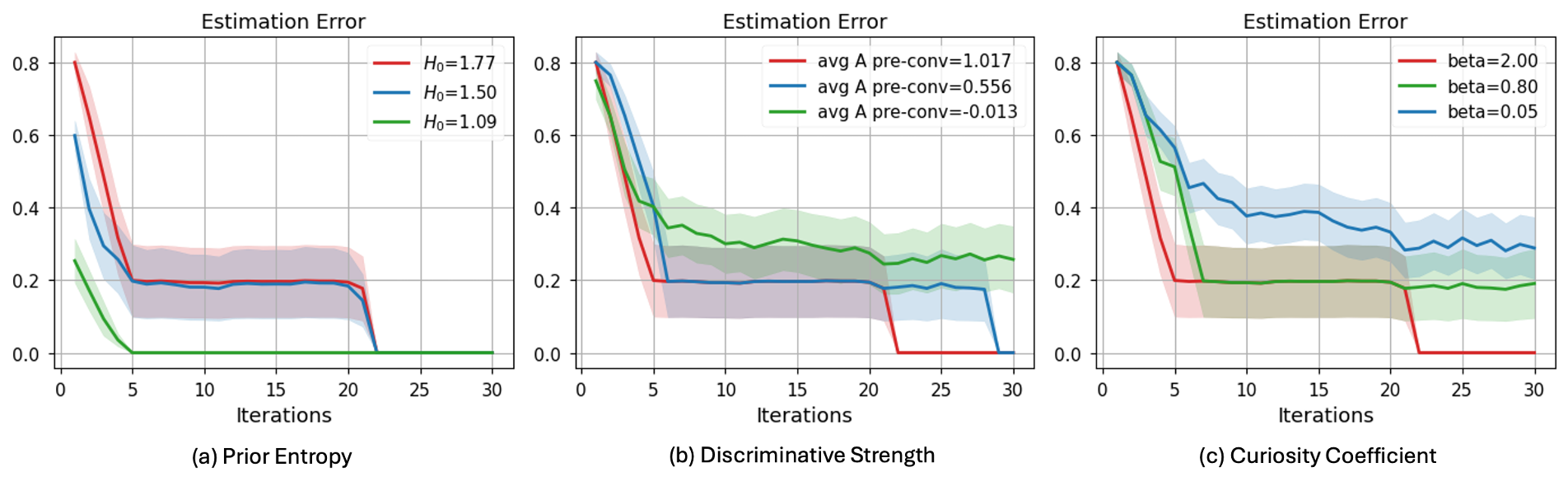}
    \caption{Discrete sandbox to validate Theorem~\ref{thm:self-consistency}. Error bars represent $\pm 0.2$ std over 5 seeds.}
    \label{fig:consistency}
\end{figure*}

\subsection{1D GP Bandit}

To test the regret dependencies in Theorem~\ref{thm:URB}, we consider a minimal 1D GP bandit (details in Appendix~\ref{appx:1d gp}).

We isolate two effects central to the bound: \textit{heuristic alignment} and \textit{curiosity}. Specifically, we run: (a) a \textbf{misalignment} sweep at fixed $\beta=1.0$, comparing an aligned heuristic to biased heuristics with different bias schedules; and (b) a \textbf{curiosity} sweep with aligned heuristics, using $\beta \in \{6.0, 3.0, 1.0, 0.3\}$. We report cumulative regret $R_t$, averaged over 5 seeds.

Figure~\ref{fig:regret} matches the qualitative structure of Theorem~\ref{thm:URB}: (a) \textbf{heuristic error} contributes additively to regret, where larger and longer-lasting bias leads to higher cumulative regret; and (b) when the \textbf{curiosity} is sufficient, smaller $\beta$ (more exploitation) achieves the lowest regret, while large $\beta$ over-emphasizes information gain and incurs additional regret from exploratory queries.

\begin{figure*}[t!]
    \centering
    \includegraphics[width=0.95\linewidth]{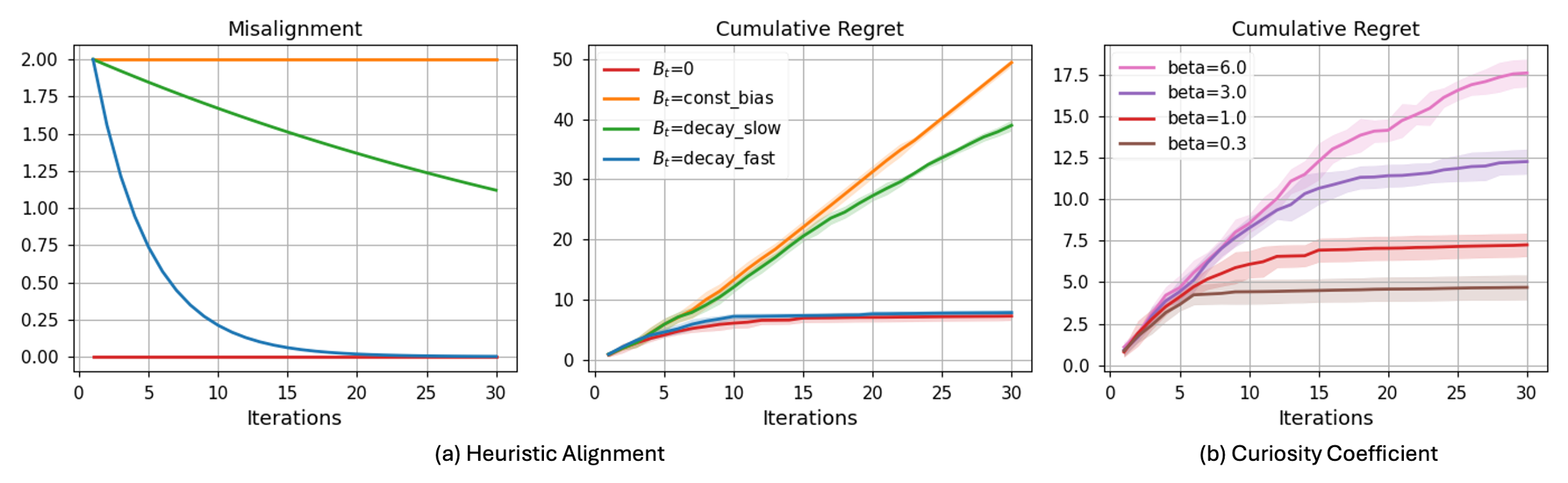}
    \caption{1D GP bandit to validate Theorem~\ref{thm:URB}. Error bars represent $\pm 0.2$ std over 5 seeds.}
    \label{fig:regret}
\end{figure*}

\section{Experiments on Real-World Problems}

Next, we extend the justification of the developed theorems through two categories of real-world problems where learning and optimization are inherently interwined but with different focus on either \textit{epistemic consideration} (\ie, learning), or \textit{pragmatic evaluation} (\ie, optimization).

\subsection{Constrained System Identification}

The objective is to precisely learn unknown system parameters, $\theta$, when valid observations can only be gathered under specific operational constraints, quantified by $C(y)\le0$. 
For instance, when monitoring a chemical plume, the goal is to identify its source ($\theta$) while ensuring that sensor measurements ($y$) do not exceed a saturation threshold ($C(y)$).
This type of task can be found in numerous applications, including environmental monitoring \citep{KonakovicLukovic2020Diversity-guidedEvaluations} and catalyst design \citep{Zhong2020AcceleratedLearning}.

In such cases, it is straightforward to choose the interested state as $s=\theta$, and the energy function as $h(y|\mathcal{D}_{t})=\mathbb{I}(C(y)>0)$. 
This leads to an acquisition function:
\begin{equation*}
\begin{split}
    \alpha(x|\mathcal{D}_{t})=\beta I(\theta;(x,y)|\mathcal{D}_{t}) - \mathbb{E}_{p(y| x, \mathcal{D}_{t})}\left[\mathbb{I}(C(y)>0)\right].
\end{split}
\end{equation*}

We perform experiments on a real-world problem of environmental monitoring in 2d plume fields where the sensors have a saturation threshold $y_{max}$ (\ie, $C(y)=y-y_{max}$). (Detailed settings and hyper-parameter choices in Appendix~\ref{appendix:plume field}).
We consider three types of monitoring tasks: (a) locating the unknown source location; (b) estimating unknown wind direction and strength; and (c) identifying the active sources in fields with multiple sources. 

We investigate how the sufficient curiosity condition depends on task-dependent mutual information by performing ablations over different $\beta$, as illustrated in Figure~\ref{fig:plume_consistency}. From task (a) to (c), the correlation between sensor measurements and latent parameters weakens, so the mutual information term shrinks. In these less informative regimes, a larger $\beta$ is needed to rescale the information-gain term. However, if $\beta$ is too large, the acquisition becomes overly exploratory, which in turn degrades estimation performance.
This yields a simple guideline: start with a moderate $\beta$, increase it only when the mutual information term is consistently small (\eg, due to weak sensor–parameter coupling), and stop or decrease $\beta$ once further increases no longer reduce, or start to increase, the estimation error. In practice, $\beta$ should be just large enough for the curiosity term to matter, but not so large that it induces unnecessary exploration.

\begin{figure*}[t!]
    \centering
    \includegraphics[width=0.95\linewidth]{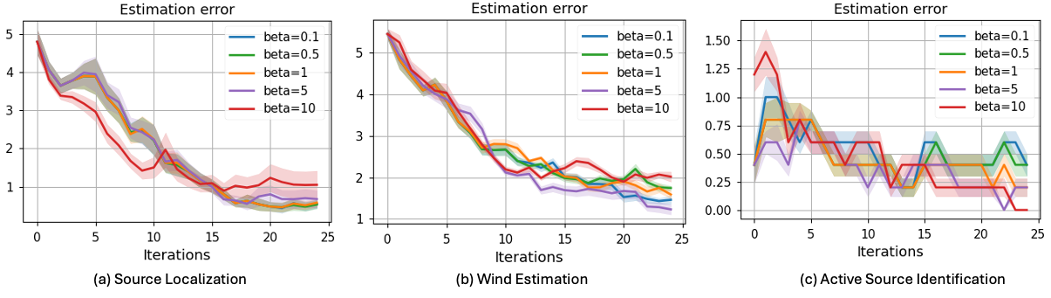}
    \caption{Constrained system identification on environmental monitoring in 2d plume fields. Error bars represent $\pm 0.2$ std over 5 seeds.}
    \label{fig:plume_consistency}
\end{figure*}

\begin{figure*}[t!]
    \centering
    \includegraphics[width=0.95\linewidth]{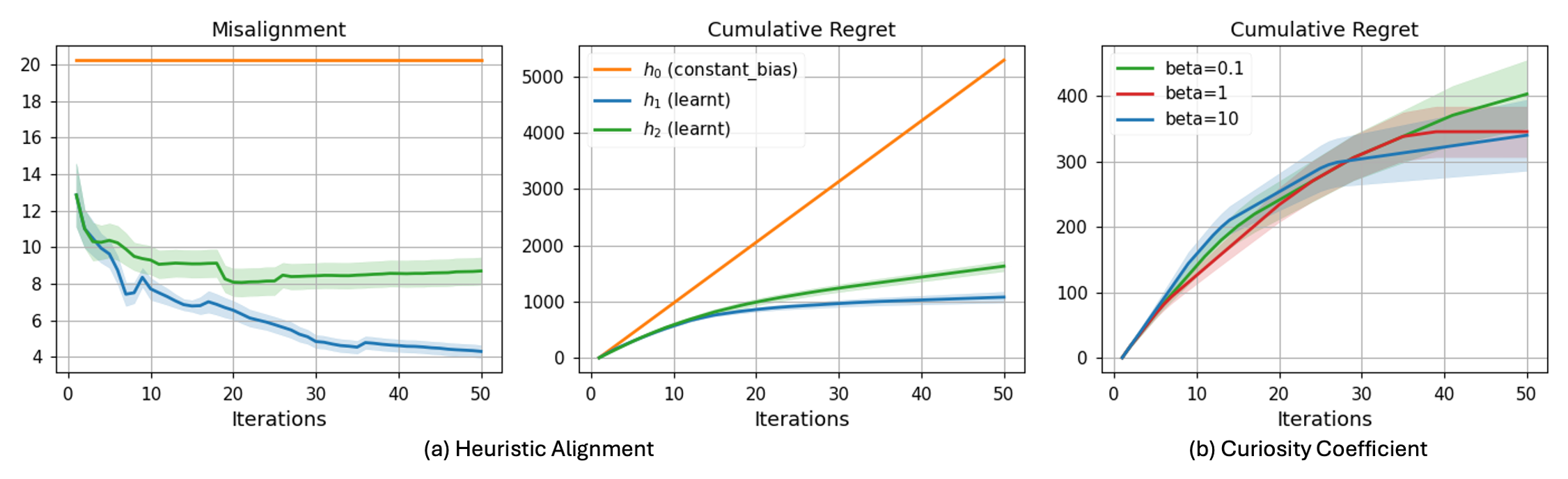}
    \caption{Composite BO on distributed energy resource allocation in power grids. Error bars represent $\pm 0.2$ std over 5 seeds.}
    \label{fig:opf_regret}
\end{figure*}

\subsection{Composite Bayesian Optimization}

A multi-objective optimization problem, where the objectives are weighted by a preference function $g(y)$ that is unknown \textit{a priori} and must be simultaneously learned during the optimization process. Such scenarios commonly arise in simulation-based design \citep{gonzalez2025implementation, coelho2025composite} and A/B testing \citep{Bakshy2018AE:Experimentation}

We expand the interested state into the whole input space, \ie, $s=f_{\mathcal{X}}$, and with non-parametric models like GPs, we can leverage Lemma 3.2 in \cite{AIFBO-application-arxiv} and derive an acquisition function:
\begin{equation} 
\begin{split}
    \alpha(x|\mathcal{D}_{t})=\beta I(f_{x};y|\mathcal{D}_{t}) + \mathbb{E}_{p(y| x, \mathcal{D}_{t})}[h_t(y)],
\end{split}
\end{equation}
where $h_{t}(y)$ is an estimation of $g(y)$ that will be learned online during the optimization process.

We perform experiments on a real-world problem of distributed energy resource allocation in power grids {($x$:40d, $y$:4d) (Detailed settings and hyper-parameter choices in Appendix~\ref{appendix:power grid}).
We investigate the effect of heuristic models and curiosity through two sweeps: (a) fixing $\beta = 1$ with different heuristic models; and (b) fixing a true heuristic model and sweeping $\beta \in \{0.1, 1, 10\}$.

Figure~\ref{fig:opf_regret} further highlights how both the heuristic $h$ and curiosity $\beta$ govern no-regret behavior: (a) cumulative regret stabilizes only when $h$ (and its bias) converges, whereas a constant bias yields approximately linear regret growth; and (b) sufficient curiosity is necessary for regret to converge. Moreover, once $\beta$ is large enough to ensure convergence, increasing $\beta$ incurs larger transient regret before stabilization.

\section{Conclusion and Limitation}

AIF offers a single objective that unifies learning and optimization by trading off epistemic value (information gain) and pragmatic value (expected cost). This paper shows that the reliability of this trade-off is governed by one mechanism: curiosity. When the curiosity coefficient is sufficiently large, the epistemic drive cannot be dominated by the pragmatic term, and EFE minimization becomes provably well-behaved—yielding self-consistent learning (posterior concentration on the true model) together with no-regret optimization (bounded cumulative regret). 
Those theories are further turned into actionable guidance for hybrid learning-optimization design in real-world problems.

These guarantees are sufficient but may be conservative, so deriving tight or adaptive $\beta_t$ schedules remains open. The results also depend on identifiability/regularity assumptions and can degrade under misspecification, nonstationarity, or partial observability. Finally, controlling the pragmatic heuristic alignment in realistic tasks might be challenging, so our mechanism-focused experiments do not replace broad benchmarking.

\section*{Impact Statement}

This work provides a theoretical foundation for active inference as a principled approach to hybrid learning-optimization under uncertainty. By formalizing when EFE minimization yields posterior self-consistency and no-regret behavior, the results can improve the reliability of decision-making systems that must learn online while acting—e.g., robotics, adaptive experimentation, scientific discovery workflows, and safety-critical monitoring—where myopic exploitation or uncontrolled exploration can cause wasted resources or degraded performance. The framework also clarifies how design choices (curiosity level and heuristic alignment) affect learning and regret, offering actionable guidance for practitioners who deploy uncertainty-aware agents.

Potential risks stem from misapplication: setting curiosity too high can lead to unnecessary exploration and inefficiency, while using poorly aligned pragmatic heuristics can bias behavior and increase regret, potentially amplifying errors in downstream decisions. As our guarantees rely on modeling assumptions and heuristic alignment, users should validate calibration and alignment in their target domain and avoid over-interpreting theoretical guarantees as blanket safety assurances. We do not anticipate direct negative societal impacts beyond these standard risks of deploying automated decision-making systems; the primary contribution is methodological and aims to support more transparent and predictable exploration–exploitation trade-offs.

\bibliography{refs}
\bibliographystyle{icml2026}

\newpage
\appendix
\onecolumn

\section{Proof of Theorem \ref{thm:self-consistency}}\label{appendix:self-consistency}
\begin{proof}
According to assumption (iii),
\begin{equation*}
    \max_{x \in \mathcal{X}} \left\{ \beta_{t}I(s;(x,y)|\mathcal{D}_{t-1}) - \mathbb{E}_{p(y| x, \mathcal{D}_{t-1})}[h_{t}(y)] \right\} \ge 0.
\end{equation*}
Then by the decision criteria \eqref{eqn:policy for learning} of $x_{t}$, 
\begin{equation*}
\begin{split}
&\beta_{t}I(s;(x_{t},y_{t})|\mathcal{D}_{t-1}) - \mathbb{E}_{p(y_{t}| x_{t}, \mathcal{D}_{t-1})}[h_{t}(y_{t})] \ge 0.
\end{split}
\end{equation*}
Since $h_{t}(y)\ge 0, \forall y \in \mathcal{Y}$, we have
\begin{equation*}
\begin{split}
&\beta_{t}I(s;(x_{t},y_{t})|\mathcal{D}_{t-1}) - \mathbb{E}_{p(y_{t}| x_{t}, \mathcal{D}_{t-1})}[h_{t}(y_{t})] + \int_y p(y|x_t,s^*)h_t(y) \ge \int_y p(y|x_t,s^*)h_t(y) \ge 0.
\end{split}
\end{equation*}

Define the instant regret of expected observation as
\begin{equation*}
    r_t \coloneq \mathbb{E}_{p(y_{t}| x_{t}, \mathcal{D}_{t-1})}[h_{t}(y_{t})] - \int_y p(y|x_t,s^*)h_t(y),
\end{equation*}
then 
\begin{equation*}
    \beta_{t}I(s;(x_{t},y_{t})|\mathcal{D}_{t-1}) - r_{t} \ge 0.
\end{equation*}

Therefore, 
\begin{equation*}
    I(s;(x_{t},y_{t})|\mathcal{D}_{t-1}) \ge \frac{r_{t}}{\beta_{t}} \ge \frac{r_{t}}{\bar{\beta}_{T}}
\end{equation*}
where $\bar{\beta}_{T} \coloneq \max_{t\in[1,T]}\beta_{t}$.

Let $I_{t}\coloneq I(s;(x_{t},y_{t})|\mathcal{D}_{t-1})$, since
$
    \mathbb{E}[I_t]=\mathbb{E}[H(q_{t-1}(s))-H(q_{t}(s))],
$
then 
\begin{equation*}
    \sum_{t=1}^{T}\mathbb{E}[I_t]=\mathbb{E}[H(q_{0}(s))-H(q_{T}(s))] \le H(q_{0}(s)) < \infty
\end{equation*}

Therefore,
\begin{equation*}
    \sum_{t=1}^{T}\mathbb{E}[r_t] \le \bar{\beta}_{T}\sum_{t=1}^{T}\mathbb{E}[I_t] \le \bar{\beta}_{T} H_{0}.
\end{equation*}

Since 
\begin{equation*}
\begin{split}
    r_t \coloneq & \mathbb{E}_{p(y_{t}| x_{t}, \mathcal{D}_{t-1})}[h_{t}(y_{t})] - \int_y p(y|x_t,s^*)h_t(y) \\
    = &\int_y\sum_s p(y|x_t,s)q_{t-1}(s)h_{t}(y)-\int_y p(y|x_t,s^*)h_t(y) \\
    = &\sum_{s\neq s^*}\left[\int_y p(y|x_{t},s)h_{t}(y)\right]q_{t-1}(s)-\left[\int_y p(y|x_t,s^*)h_t(y)\right](1-q_{t-1}(s^*)) \\
    = &\sum_{s\neq s^*}\left[\int_y p(y|x_{t},s)h_{t}(y)\right]q_{t-1}(s)-\left[\int_y p(y|x_t,s^*)h_t(y)\right]\sum_{s\neq s^*}q_{t-1}(s) \\
    = &\sum_{s\neq s^*}\left[\int_y p(y|x_{t},s)h_{t}(y)-\int_y p(y|x_t,s^*)h_t(y)\right]q_{t-1}(s) \\
    = &\sum_{s\neq s^*}\Delta_{s}(x_{t})q_{t-1}(s),
\end{split}
\end{equation*}
then according to assumption (ii), we have
\begin{equation*}
    r_{t} \ge A_{t} \sum_{s\neq s^*}q_{t-1}(s)=A_{t} w_{t}.
\end{equation*}
Then
\begin{equation*}
    w_{t} \le \frac{r_t}{A_t} \le \frac{r_t}{\underline{A}_{T}},
\end{equation*}
where $\underline{A}_{T} \coloneq \min_{t\in[1,T]}A_{t}$.

Therefore, 
\begin{equation*}
    \sum_{t=1}^{T}\mathbb{E}[w_t] \le \frac{1}{\underline{A}_{T}}\sum_{t=1}^{T}\mathbb{E}[r_t] \le \frac{\bar{\beta}_{T}}{\underline{A}_{T}}\sum_{t=1}^{T}\mathbb{E}[I_t] \le \frac{\bar{\beta}_{T}H_{0}}{\underline{A}_{T}}.
\end{equation*}

Then we get a bound on the time-averaged expected posterior mass: 
\begin{equation*}
    \frac{1}{T} \sum_{t=1}^{T}\mathbb{E}[w_t] \le \frac{\bar{\beta}_{T}H_{0}}{\underline{A}_{T}T}.
\end{equation*}

In particular, there exists at least one time $t \in \left\{1, \dots, T \right\}$ with 
\begin{equation*}
    \mathbb{E}[w_t] \le \frac{\bar{\beta}_{T}H_{0}}{\underline{A}_{T}T}.
\end{equation*}

Equivalently, to get $\mathbb{E}[w_t]\le \epsilon$, it suffices to run for
\begin{equation*}
    T \ge \frac{\bar{\beta}_{T}\; H_{0}}{\underline{A}_{T}\; \epsilon}
\end{equation*}
steps.

\end{proof}

\section{Proof of Theorem \ref{thm:URB}}\label{appendix:URB}

\begin{lemma}[Lemma 5.3 in \cite{Srinivas2009GaussianDesign}]\label{lemma:total information gain}
The information gain for the points selected can be expressed in terms of the predictive variances. If $f_{T}=(f(x_{t}))\in\mathbb{R}^{T}$:
$$I(y_{T};f_{T})=\frac{1}{2}\sum_{t-1}^{T}\log (1+\sigma^{-2}\sigma^{2}_{t-1}(x_{t})).$$
\end{lemma}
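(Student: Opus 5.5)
The statement to prove is Lemma~\ref{lemma:total information gain}, the chain-rule identity for GP information gain. I would prove it by the chain rule for mutual information, combined with the closed-form Gaussian entropy.

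\textbf{Step 1: reduce to entropies.} Write $I(y_T;f_T)=H(y_T)-H(y_T\mid f_T)$. Conditional on $f_T$, the observations $y_t=f(x_t)+\varepsilon_t$ have independent $\mathcal{N}(0,\sigma^2)$ noise. Hence
\begin{equation*}
H(y_T\mid f_T)=\tfrac{T}{2}\log(2\pi e\sigma^2).
\end{equation*}

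\textbf{Step 2: decompose the joint entropy.} Apply the chain rule:
\begin{equation*}
H(y_T)=\sum_{t=1}^{T}H(y_t\mid y_1,\dots,y_{t-1}).
\end{equation*}
Under the GP prior with Gaussian likelihood, the conditional law of $y_t$ given $\mathcal{D}_{t-1}$ is Gaussian. Its mean is $\mu_{t-1}(x_t)$ and its variance is $\sigma^2+\sigma^2_{t-1}(x_t)$, where $\sigma^2_{t-1}(x)=\kappa_{t-1}(x,x)$ is the posterior variance.

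\textbf{Step 3: handle adaptive query points.} The points $x_t$ are chosen adaptively, but each $x_t$ is a deterministic function of $\mathcal{D}_{t-1}$. The posterior variance $\sigma^2_{t-1}(x_t)$ depends only on the locations $x_1,\dots,x_t$, not on the observed $y$ values. So conditioning on the history leaves a Gaussian with this known variance.

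\textbf{Step 4: evaluate and subtract.} Each conditional entropy equals $\tfrac12\log\big(2\pi e(\sigma^2+\sigma^2_{t-1}(x_t))\big)$. Subtracting the noise entropy term by term gives
\begin{equation*}
I(y_T;f_T)=\tfrac12\sum_{t=1}^{T}\log\big(1+\sigma^{-2}\sigma^2_{t-1}(x_t)\big).
\end{equation*}
The summation index in the statement should read $t=1$.

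\textbf{Main obstacle.} The only delicate point is Step 3, i.e.\ justifying the chain rule when the design is adaptive. I would resolve it by conditioning on the realized sequence $x_1,\dots,x_T$. Since the GP posterior covariance is independent of the outcomes, the identity holds pathwise. Equivalently, one can compute $\tfrac12\log\det(I+\sigma^{-2}K_T)$ and factor the determinant through Schur complements, whose successive pivots are exactly $1+\sigma^{-2}\sigma^2_{t-1}(x_t)$.
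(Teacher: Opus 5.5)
Your proposal is correct and follows the standard argument of Srinivas et al., which the paper imports by citation without giving a proof of its own. That argument uses the chain rule for $H(y_T)$, the Gaussian predictive entropy with variance $\sigma^2+\sigma^2_{t-1}(x_t)$, subtraction of the noise entropy, and the fact that posterior variances depend only on query locations; you are also right that the index should read $t=1$.

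Of your two treatments of adaptivity, the $\tfrac12\log\det(I+\sigma^{-2}K_T)$ Schur-complement reading is the rigorous one. Literally conditioning on the realized $x_1,\dots,x_T$ changes the joint law of $(f,y)$ under an adaptive policy. The deterministic determinant identity for the realized point set is also exactly the sense the paper needs when it bounds $I(y_T;f_T)\le\rho_T$.
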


\begin{lemma}[Lemma 5.5 in \cite{Srinivas2009GaussianDesign}]\label{lemma:GP bounds}
Pick $\delta \in (0,1)$ and set $\zeta_{t}=2\log (m_{t}/\delta)$, where $\sum_{t\ge1}m_{t}^{-1}=1,m_{t}>0$. Then,
$$|f(x_{t})-\mu_{t-1}(x_{t})|\le \zeta_{t}^{1/2}\sigma_{t-1}(x_{t}) \quad \forall t\ge 1$$
holds with probability $\ge1-\delta$.
\end{lemma}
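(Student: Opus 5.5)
The plan is the standard Gaussian-tail-plus-union-bound argument, done conditionally on the history so that the adaptively chosen query $x_t$ can be treated as fixed. Fix $t\ge 1$ and condition on $\mathcal{D}_{t-1}$. The policy \eqref{eqn:policy for optimization} makes $x_t$ a (measurable) function of $\mathcal{D}_{t-1}$, with any tie-breaking randomness independent of $f$ given the data. Under the GP prior with Gaussian likelihood, the posterior of $f(x_t)$ given $\mathcal{D}_{t-1}$ is therefore exactly $\mathcal{N}(\mu_{t-1}(x_t),\sigma^2_{t-1}(x_t))$. If $\sigma_{t-1}(x_t)=0$, the event holds almost surely. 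Otherwise set
\[
u_t\coloneq \frac{f(x_t)-\mu_{t-1}(x_t)}{\sigma_{t-1}(x_t)},
\]
which is standard normal conditionally on $\mathcal{D}_{t-1}$.

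The first key step is the elementary tail bound $\Pr(|u|>c)\le e^{-c^2/2}$ for $u\sim\mathcal{N}(0,1)$ and $c>0$. I would obtain it by writing
\[
\Pr(u>c)=(2\pi)^{-1/2}\int_c^\infty e^{-r^2/2}\,dr .
\]
Substituting $r=c+v$ and using $e^{-cv}\le 1$ for $v\ge 0$ gives $\Pr(u>c)\le e^{-c^2/2}\Pr(u>0)=\tfrac12 e^{-c^2/2}$. Symmetry then doubles this to the two-sided bound.

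Taking $c=\zeta_t^{1/2}$ gives $e^{-\zeta_t/2}=e^{-\log(m_t/\delta)}=\delta/m_t$. So, conditionally on $\mathcal{D}_{t-1}$, the failure event $|f(x_t)-\mu_{t-1}(x_t)|>\zeta_t^{1/2}\sigma_{t-1}(x_t)$ has probability at most $\delta/m_t$. Since this bound does not depend on $\mathcal{D}_{t-1}$, the tower property gives the same bound unconditionally.

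The final step is a union bound over $t\ge1$: the total failure probability is at most $\sum_{t\ge1}\delta/m_t=\delta\sum_{t\ge1}m_t^{-1}=\delta$. Hence the inequality holds simultaneously for all $t$ with probability at least $1-\delta$.

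The only delicate point, which I expect to be the main obstacle, is justifying that adaptivity of $x_t$ does not spoil Gaussianity. This is exactly why the argument is carried out conditionally on $\mathcal{D}_{t-1}$. Within that conditioning, $x_t$ is fixed and the GP posterior formula applies without modification; this is unaffected by the fact that $x_t$ is chosen by the AIF rule rather than by UCB.
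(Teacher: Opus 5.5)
Your proposal is correct and takes essentially the same approach the paper relies on. The paper gives no proof of its own and cites Srinivas et al., who prove their Lemma 5.5 exactly as you do: condition on the past so that $x_t$ is fixed and $f(x_t)\sim\mathcal{N}(\mu_{t-1}(x_t),\sigma^2_{t-1}(x_t))$, apply the $e^{-c^2/2}$ Gaussian tail bound with $c=\zeta_t^{1/2}$, and union-bound over $t$ with weights $\delta/m_t$. Your observation that only the measurability of $x_t$ in the history matters, not the specific UCB rule, is precisely what justifies reusing the lemma for the AIF policy.
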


\begin{lemma}[Lemma 3.2 in \cite{AIFBO-application-arxiv}]\label{lemma:mutual info}
Let $\mathbf{X}\subseteq \mathcal{X}$ be a subset of inputs, and the corresponding function values evaluated at those inputs be denoted as $f_{\mathbf{X}}$. 
Given a historical dataset $\mathcal{D}_{t}$, and new measurements $\mathbf{Y}$ observed at $\mathbf{X}$, the mutual information $$I(f_{\mathcal{X}};(\mathbf{X},\mathbf{Y})|\mathcal{D}_{t})=I(f_{\mathbf{X}};\mathbf{Y}|\mathcal{D}_{t}),$$
for any (finite or infinite) set $\mathcal{X}$.
\end{lemma}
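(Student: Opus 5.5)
The plan is to reduce the identity to two applications of the chain rule for mutual information, together with the conditional-independence structure of the Gaussian likelihood. Throughout I condition on $\mathcal{D}_{t}$ and write $I(\cdot;\cdot)$ for $I(\cdot;\cdot\mid\mathcal{D}_{t})$.

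\emph{Step 1 (removing $\mathbf{X}$).} By the chain rule,
\begin{equation*}
I(f_{\mathcal{X}};(\mathbf{X},\mathbf{Y})) = I(f_{\mathcal{X}};\mathbf{X}) + I(f_{\mathcal{X}};\mathbf{Y}\mid \mathbf{X}).
\end{equation*}
The query set $\mathbf{X}$ is chosen by the acquisition rule as a (possibly randomized) function of $\mathcal{D}_{t}$ alone, with any randomization independent of $f$. Hence $\mathbf{X}$ is conditionally independent of $f_{\mathcal{X}}$ given $\mathcal{D}_{t}$, and the first term vanishes. What remains, for the realized $\mathbf{X}$, is the quantity $I(f_{\mathcal{X}};\mathbf{Y})$. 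This is the quantity written as $I(f_{\mathbf{X}};\mathbf{Y}\mid\mathcal{D}_{t})$ once we show that only $f_{\mathbf{X}}$ matters.

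\emph{Step 2 (finite $\mathcal{X}$).} Split $f_{\mathcal{X}}=(f_{\mathbf{X}},f_{\mathcal{X}\setminus\mathbf{X}})$ and apply the chain rule again:
\begin{equation*}
I(f_{\mathcal{X}};\mathbf{Y}) = I(f_{\mathbf{X}};\mathbf{Y}) + I(f_{\mathcal{X}\setminus\mathbf{X}};\mathbf{Y}\mid f_{\mathbf{X}}).
\end{equation*}
The likelihood is $\mathbf{Y}=f_{\mathbf{X}}+\varepsilon$, with noise $\varepsilon$ independent of $f$ and of $\mathcal{D}_{t}$. So $p(\mathbf{Y}\mid f_{\mathcal{X}},\mathbf{X},\mathcal{D}_{t})=p(\mathbf{Y}\mid f_{\mathbf{X}})$, which gives the Markov chain $f_{\mathcal{X}\setminus\mathbf{X}}\to f_{\mathbf{X}}\to\mathbf{Y}$. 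The second term is therefore zero, which proves the claim when $\mathcal{X}$ is finite.

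\emph{Step 3 (infinite $\mathcal{X}$).} This is the main obstacle, because $f_{\mathcal{X}}$ is then an infinite-dimensional random element and mutual information must be understood measure-theoretically. I would use the Dobrushin/Gelfand--Yaglom characterization: the mutual information between a process and $\mathbf{Y}$ is the supremum of $I(f_{F};\mathbf{Y})$ over finite index sets $F\subseteq\mathcal{X}$. The cylinder $\sigma$-algebras generated by finite coordinates generate the product $\sigma$-algebra, so this supremum equals the full mutual information.

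The supremum can be restricted to sets $F$ that contain $\mathbf{X}$. Enlarging $F$ can only increase mutual information, by monotonicity, i.e.\ by the data-processing inequality applied to the coordinate projection.

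For every such $F$, Step 2 applied with $F$ in place of $\mathcal{X}$ gives $I(f_{F};\mathbf{Y})=I(f_{\mathbf{X}};\mathbf{Y})$. The Markov property $f_{F\setminus\mathbf{X}}\to f_{\mathbf{X}}\to\mathbf{Y}$ holds for every such $F$. The supremum is thus constant, equal to $I(f_{\mathbf{X}};\mathbf{Y}\mid\mathcal{D}_{t})$, which completes the argument.

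The only delicate points are justifying the sup-over-finite-marginals formula for the GP posterior process and confirming that conditioning on $\mathcal{D}_{t}$ commutes with this limit. Both follow because, given $\mathcal{D}_{t}$, the posterior is again a well-defined GP whose finite-dimensional marginals are consistent.
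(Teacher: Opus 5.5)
Your argument is correct. The paper gives no proof of this lemma. It imports it from the cited companion work and invokes it, e.g.\ in Lemma~\ref{lemma:instant regret bound} with $\mathbf{X}=\{x_t\}$. So there is no in-paper proof to match, and the useful comparison is with the short direct argument.

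That argument works from the $\mathbf{Y}$ side instead of splitting $f_{\mathcal{X}}$. Conditionally on $\mathcal{D}_t$, the joint law of $(f_{\mathcal{X}},\mathbf{Y})$ has density $\phi_\sigma(\mathbf{Y}-f_{\mathbf{X}})/p(\mathbf{Y}\mid\mathcal{D}_t)$ with respect to the product of its marginals, where $\phi_\sigma$ is the $\mathcal{N}(0,\sigma^2 I)$ density. This density depends on $f_{\mathcal{X}}$ only through $f_{\mathbf{X}}$, and it is exactly the density for the pair $(f_{\mathbf{X}},\mathbf{Y})$. Hence both mutual informations are the expectation of the same log-ratio. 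Equivalently, $H(\mathbf{Y}\mid f_{\mathcal{X}},\mathcal{D}_t)=H(\mathbf{Y}\mid f_{\mathbf{X}},\mathcal{D}_t)$ for the differential entropies of the finite-dimensional $\mathbf{Y}$. This covers finite and infinite $\mathcal{X}$ at once, because no chain rule is ever applied to the infinite-dimensional $f_{\mathcal{X}}$. Only the Radon--Nikodym form of KL divergence is used, and it is valid on arbitrary measurable spaces.

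Your route applies the chain rule to $f_{\mathcal{X}}$ itself. That is only literally available for finite $\mathcal{X}$, which is why you need Step 3, and Step 3 is sound. The union of the finite-coordinate $\sigma$-algebras is an algebra generating the product $\sigma$-algebra, so Dobrushin's theorem gives $I(f_{\mathcal{X}};\mathbf{Y})=\sup_F I(f_F;\mathbf{Y})$. That supremum is constant on $F\supseteq\mathbf{X}$. Your route keeps every information-theoretic manipulation on finite-dimensional Gaussian vectors, at the price of a heavier continuity theorem.

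Two minor points. Step 1 is more than needed: in the lemma and its uses, $\mathbf{X}$ is a fixed candidate set (or determined by the conditioning data), so $I(f_{\mathcal{X}};\mathbf{X}\mid\mathcal{D}_t)=0$ trivially. Also, Step 3 tacitly requires $\mathbf{X}$ to be a finite batch, and you should state that explicitly.
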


\begin{lemma}\label{lemma:shifted expectation}
If $y\sim \mathcal{N}(\mu,\sigma^2)$, $c\in \mathbb{R}$, then
$$\mathbb{E}[|y-c|]=(\mu-c)[1-2\Phi(\frac{c-\mu}{\sigma})]+\sigma \sqrt{\frac{2}{\pi}}\exp\{-\frac{(c-\mu)^2}{2\sigma^2}\},$$
where $\Phi(\cdot)$ is the standard normal CDF.
\end{lemma}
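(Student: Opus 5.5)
The plan is to reduce to a standard normal and compute a one-sided truncated first moment. Write $y=\mu+\sigma z$ with $z\sim\mathcal{N}(0,1)$, and set $a\coloneq (c-\mu)/\sigma$. Let $\phi$ denote the standard normal density, so that $\phi'(z)=-z\phi(z)$. Then
\[
|y-c|=\sigma\,|z-a|.
\]
It therefore suffices to show that
\[
\mathbb{E}|z-a| = -a\,[1-2\Phi(a)] + 2\phi(a).
\]
Multiplying by $\sigma$ and substituting $-\sigma a=\mu-c$ and $2\sigma\phi(a)=\sigma\sqrt{2/\pi}\,\exp\{-(c-\mu)^2/(2\sigma^2)\}$ then gives exactly the claimed formula.

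To compute $\mathbb{E}|z-a|$, I will avoid handling both tails separately by using the identity $|u| = u + 2(-u)_+$. This gives
\[
\mathbb{E}|z-a| = \mathbb{E}[z-a] + 2\,\mathbb{E}[(a-z)_+] = -a + 2\,\mathbb{E}[(a-z)_+].
\]
The whole computation then reduces to the truncated moment
\[
\mathbb{E}[(a-z)_+]=\int_{-\infty}^{a}(a-z)\phi(z)\,dz = a\Phi(a) - \int_{-\infty}^{a} z\phi(z)\,dz .
\]

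The only genuinely computational step, and the one I expect to require care, is this truncated integral. Because $z\phi(z)=-\phi'(z)$, we get $\int_{-\infty}^{a} z\phi(z)\,dz = -\phi(a)$, using $\phi(-\infty)=0$. Hence $\mathbb{E}[(a-z)_+]=a\Phi(a)+\phi(a)$. Substituting back,
\[
\mathbb{E}|z-a| = -a + 2a\Phi(a) + 2\phi(a) = -a[1-2\Phi(a)]+2\phi(a),
\]
as required.

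Finally, I will check signs when reverting to the original variables. The prefactor $\sigma\cdot(-a)=\mu-c$ multiplies $1-2\Phi\!\left(\tfrac{c-\mu}{\sigma}\right)$, and $2\phi(a)=\sqrt{2/\pi}\,e^{-a^2/2}$. As a sanity check, at $c=\mu$ the formula reduces to $\sigma\sqrt{2/\pi}$, which is the known mean absolute deviation of a Gaussian.
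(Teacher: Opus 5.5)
Your proof is correct and takes essentially the same route as the paper: standardize via $y=\mu+\sigma z$ with $a=(c-\mu)/\sigma$, and evaluate the truncated first moment using $\int_{-\infty}^{a} z\phi(z)\,dz=-\phi(a)$. The only difference is cosmetic: you use $|u|=u+2(-u)_+$ to reduce to a single tail integral, whereas the paper splits $\mathbb{E}|y-c|$ into the two tails $(-\infty,a)$ and $(a,\infty)$ and computes each one directly.
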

\begin{proof}
Since $p(y)=\frac{1}{\sqrt{2\pi\sigma^2}}\exp\{-\frac{(y-\mu)^2}{2\sigma^2}\}$, let 
$$z=\frac{y-\mu}{\sigma} \Rightarrow y=\mu+\sigma z, \quad dy=\sigma dz,$$
then $p(y)dy=\phi(z)dz$, where $\phi(z)=\frac{1}{\sqrt{2\pi}}\exp\{-\frac{z^2}{2}\}$ is the standard normal PDF.

Define $a:=\frac{c-\mu}{\sigma}$, then
\begin{equation*}
\begin{split}
    \mathbb{E}[|y-c|]&=\int_{-\infty}^{c}(c-y)p(y)dy+\int_{c}^{\infty}(y-c)p(y)dy \\
    &=\int_{-\infty}^{a}(c-\mu-\sigma z)\phi(z)dz+\int_{a}^{\infty}(\mu-c+\sigma z)\phi(z)dz \\
    &=\int_{-\infty}^{a}(c-\mu)\phi(z)dz-\sigma\int_{-\infty}^{a}z\phi(z)dz+\int_{a}^{\infty}(\mu-c)\phi(z)dz+\sigma\int_{a}^{\infty}z\phi(z)dz \\
    &=(c-\mu)\Phi(a)-\sigma (-\phi(a))+(\mu-c)(1-\Phi(a))+\sigma \phi(a) \\
    &=(\mu-c)(1-2\Phi(a))+2\sigma \phi(a) \\
    &=(\mu-c)[1-2\Phi(\frac{c-\mu}{\sigma})]+\sigma \sqrt{\frac{2}{\pi}}\exp\{-\frac{(c-\mu)^2}{2\sigma^2}\}.
\end{split}
\end{equation*}
\end{proof}

\begin{lemma}\label{lemma:instant regret bound}
Fix $t\ge 1$. If $|f(x_{t})-\mu_{t-1}(x_{t})|\le \zeta_{t}^{1/2}\sigma_{t-1}(x_{t})$, then the instant regret $r(f(x_{t}))$ is bounded by 
\begin{equation*}
    r(f(x_t)) \le \beta_{t}I(f(x_{t});y_{t}|\mathcal{D}_{t-1}) + L(\zeta_{t}^{1/2}+\sqrt{\frac{2}{\pi}})\sigma_{t-1}(x_t) + B_t.
\end{equation*}
\end{lemma}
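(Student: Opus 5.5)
The plan is to decompose $r(f(x_t))$ by passing through the heuristic and the predictive distribution at $x_t$. By the Gaussian likelihood and the GP posterior, $y_t\mid x_t,\mathcal{D}_{t-1}\sim\mathcal{N}(\mu_{t-1}(x_t),\sigma^2_{t-1}(x_t)+\sigma^2)$. For the Lipschitz step we work instead with the latent predictive $f(x_t)\mid\mathcal{D}_{t-1}\sim\mathcal{N}(\mu_{t-1}(x_t),\sigma^2_{t-1}(x_t))$, so that the variance appearing is $\sigma_{t-1}(x_t)$. We write
\begin{equation*}
r(f(x_t)) = \mathbb{E}[h_t(y)] + \bigl(\mathbb{E}[r(y)]-\mathbb{E}[h_t(y)]\bigr) + \bigl(r(f(x_t))-\mathbb{E}[r(y)]\bigr),
\end{equation*}
where the expectation is over $y\sim\mathcal{N}(\mu_{t-1}(x_t),\sigma^2_{t-1}(x_t))$. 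The three terms will be bounded separately.

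\textbf{First term.} We use assumption (iii) exactly as in the proof of Theorem~\ref{thm:self-consistency}. Sufficient curiosity makes the maximal acquisition value nonnegative, and $x_t$ attains it, so
\begin{equation*}
\mathbb{E}[h_t(y)]\le \beta_t I(s;(x_t,y)\mid\mathcal{D}_{t-1}).
\end{equation*}
Taking $s=f_{\mathcal{X}}$, Lemma~\ref{lemma:mutual info} converts this into $\beta_t I(f(x_t);y_t\mid\mathcal{D}_{t-1})$.

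\textbf{Second term.} Assumption (ii) gives $|r-h_t|\le B_t$ pointwise, so this term is at most $B_t$.

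\textbf{Third term.} By Lipschitzness (i),
\begin{equation*}
|r(f(x_t))-\mathbb{E}[r(y)]|\le L\,\mathbb{E}|y-f(x_t)|.
\end{equation*}
Apply Lemma~\ref{lemma:shifted expectation} with $c=f(x_t)$, $\mu=\mu_{t-1}(x_t)$ and standard deviation $\sigma_{t-1}(x_t)$. Since $|1-2\Phi|\le1$ and $\exp(\cdot)\le1$, this yields
\begin{equation*}
\mathbb{E}|y-c|\le|\mu-c|+\sigma_{t-1}(x_t)\sqrt{2/\pi}.
\end{equation*}
The hypothesis $|f(x_t)-\mu_{t-1}(x_t)|\le\zeta_t^{1/2}\sigma_{t-1}(x_t)$ then gives $L(\zeta_t^{1/2}+\sqrt{2/\pi})\sigma_{t-1}(x_t)$. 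Summing the three bounds gives the lemma.

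\textbf{Main obstacle.} The delicate point is the mismatch between distributions. The acquisition in (iii) uses the observation predictive, which includes the noise $\sigma^2$, whereas the stated bound carries $\sigma_{t-1}$ without noise. If we used the noisy predictive in the Lipschitz step, an extra $L\sigma$-type term would appear. I would therefore take the expectation in the first step under the latent predictive $p(f(x)\mid\mathcal{D}_{t-1})$, interpreting $h_t$ as evaluated on noiseless outcomes, consistent with $R_T=\sum r(f(x_t))$. I would flag this modeling convention explicitly, since the sufficient-curiosity inequality must be read with that same predictive for the argument to close.
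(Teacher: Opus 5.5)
Your argument is essentially the paper's proof: the same split of $r(f(x_t))$ through $\mathbb{E}[h_t]$ and $\mathbb{E}[r]$, with (iii) plus Lemma~\ref{lemma:mutual info}, assumption (ii), and Lipschitzness plus Lemma~\ref{lemma:shifted expectation} handling the three pieces. The one step you need to add is for a general latent $s$: rather than setting $s=f_{\mathcal{X}}$, use, as the paper does, that $s$ is determined by $f_{\mathcal{X}}$, so $I(s;(x_t,y_t)\mid\mathcal{D}_{t-1})\le I(f_{\mathcal{X}};(x_t,y_t)\mid\mathcal{D}_{t-1})$ by data processing. The noise mismatch you flag is genuine, and the paper resolves it exactly as you propose but without saying so: it applies Lemma~\ref{lemma:shifted expectation} to $p(y_t\mid x_t,\mathcal{D}_{t-1})$ with standard deviation $\sigma_{t-1}(x_t)$, i.e.\ it implicitly uses the noise-free predictive in both the curiosity step and the Lipschitz step.
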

\begin{proof}
According to assumption (iii),
\begin{equation*}
    \max_{x \in \mathcal{X}} \left\{ \beta_{t}I(s;(x,y)|\mathcal{D}_{t-1}) - \mathbb{E}_{p(y| x, \mathcal{D}_{t-1})}[h_{t}(y)] \right\} \ge 0.
\end{equation*}
Then by the decision criteria \eqref{eqn:policy for optimization} of $x_{t}$, 
\begin{equation*}
\begin{split}
&\beta_{t}I(s;(x_{t},y_{t})|\mathcal{D}_{t-1}) - \mathbb{E}_{p(y_{t}| x_{t}, \mathcal{D}_{t-1})}[h_{t}(y_{t})] \ge 0.
\end{split}
\end{equation*}

Since any interested state $s$ can be fully determined by the whole function $f_{\mathcal{X}}$, we have
\begin{equation*}
\begin{split}
\mathbb{E}_{p(y_{t}| x_{t}, \mathcal{D}_{t-1})}[h_{t}(y_{t})] &\le \beta_{t}I(s;(x_{t},y_{t})|\mathcal{D}_{t-1}) \\
&\le \beta_{t}I(f_{\mathcal{X}};(x_{t},y_{t})|\mathcal{D}_{t-1}) \\
&=\beta_{t}I(f(x_{t});y_{t}|\mathcal{D}_{t-1}),
\end{split}
\end{equation*}
where the last equation holds according to Lemma~\ref{lemma:mutual info}.

Therefore, 
\begin{equation*}
\begin{split}
r(f(x_{t}))&\le r(f(x_{t})) + \beta_{t}I(f(x_{t});y_{t}|\mathcal{D}_{t-1}) - \mathbb{E}_{p(y_{t}| x_{t}, \mathcal{D}_{t-1})}[h_{t}(y_{t})] \\
&=\underbrace{\beta_{t}I(f(x_{t});y_{t}|\mathcal{D}_{t-1})}_{\text{Term 1}} + \underbrace{\mathbb{E}_{p(y_{t}| x_{t}, \mathcal{D}_{t-1})}[r(f(x_{t}))-h_{t}(y_{t})]}_{\text{Term 2}}. \\
\end{split}
\end{equation*}

Term 2 can be decomposed into two parts:
\begin{equation*}
\begin{split}
\text{Term 2} &= \mathbb{E}_{p(y_{t}| x_{t}, \mathcal{D}_{t-1})}[r(f(x_{t}))-r(y_{t})+r(y_{t})-h_{t}(y_{t})] \\
&\le \underbrace{\mathbb{E}_{p(y_{t}| x_{t}, \mathcal{D}_{t-1})}[|r(f(x_{t}))-r(y_{t})|]}_{\text{Term 3}} + \underbrace{\mathbb{E}_{p(y_{t}| x_{t}, \mathcal{D}_{t-1})}[|r(y_{t})-h_{t}(y_{t})|]}_{\text{Term 4}}, \\
\end{split}
\end{equation*}

According to Lemma~\ref{lemma:shifted expectation} and assumption (i) in Theorem~\ref{thm:URB},
\begin{equation*}
\begin{split}
\text{Term 3} \le& \mathbb{E}_{p(y_{t}| x_{t}, \mathcal{D}_{t-1})}[L|f(x_{t})-y_{t}|] \\
=&L\mathbb{E}_{p(y_{t}| x_{t}, \mathcal{D}_{t-1})}[|f(x_{t})-y_{t}|] \\
=&L\{(\mu_{t-1}(x_t)-f(x_t))[1-2\Phi(\frac{f(x_t)-\mu_{t-1}(x_t)}{\sigma_{t-1}(x_t)})]\\
&+\sigma_{t-1}(x_t) \sqrt{\frac{2}{\pi}}\exp\{-\frac{(f(x_t)-\mu_{t-1}(x_t))^2}{2\sigma_{t-1}^{2}(x_t)}\}\},
\end{split}
\end{equation*}
Since $\Phi(\frac{f(x_t)-\mu_{t-1}(x_t)}{\sigma_{t-1}(x_t)}) \in [0,\frac{1}{2}]$ when $\mu_{t-1}(x_t)\ge f(x_t)$, and $\in (\frac{1}{2},1]$ when $\mu_{t-1}(x_t)< f(x_t)$, then
\begin{equation*}
    (\mu_{t-1}(x_t)-f(x_t))[1-2\Phi(\frac{f(x_t)-\mu_{t-1}(x_t)}{\sigma_{t-1}(x_t)})] \le |\mu_{t-1}(x_t) - f(x_t)|.
\end{equation*}
Furthermore, since $\exp\{z\} \in (0,1]$ when $z \le 0$, we have
\begin{equation*}
\begin{split}
\text{Term 3} &\le L[|\mu_{t-1}(x_t) - f(x_t)|+\sqrt{\frac{2}{\pi}}\sigma_{t-1}(x_t)]. \\
\end{split}
\end{equation*}
Since $|f(x_{t})-\mu_{t-1}(x_{t})|\le \zeta_{t}^{1/2}\sigma_{t-1}(x_{t})$, we have 
\begin{equation*}
\begin{split}
\text{Term 3} &\le L[\zeta_{t}^{1/2}\sigma_{t-1}(x_t)+\sqrt{\frac{2}{\pi}}\sigma_{t-1}(x_t)] \\
&= L(\zeta_{t}^{1/2}+\sqrt{\frac{2}{\pi}})\sigma_{t-1}(x_t).
\end{split}
\end{equation*}

According to assumption (ii) in Theorem~\ref{thm:URB}, we have
\begin{equation*}
\begin{split}
\text{Term 4} \le \mathbb{E}_{p(y_{t}| x_{t}, \mathcal{D}_{t-1})}[B_{t}] = B_{t}.
\end{split}
\end{equation*}

Combining all those together, we can obtain
\begin{equation*}
    r(f(x_t)) \le \beta_{t}I(f(x_{t});y_{t}|\mathcal{D}_{t-1}) + L(\zeta_{t}^{1/2}+\sqrt{\frac{2}{\pi}})\sigma_{t-1}(x_t) + B_t.
\end{equation*}

\end{proof}

Now we are ready to prove Theorem~\ref{thm:URB}.
\begin{proof}
Pick $\delta \in (0,1)$ and set $\zeta_{t}$ as in Lemma~\ref{lemma:GP bounds}. Then according to Lemma~\ref{lemma:instant regret bound}, with probability $\ge 1-\delta$, we have 
\begin{equation*}
    r(f(x_t)) \le \beta_{t}I(f(x_{t});y_{t}|\mathcal{D}_{t-1}) + L(\zeta_{t}^{1/2}+\sqrt{\frac{2}{\pi}})\sigma_{t-1}(x_t) + B_t.
\end{equation*}

Note that $\zeta_{t}$ is non-decreasing, then the cumulative regret satisfies:
\begin{equation*}
\begin{split}
R_{T}=&\sum_{t=1}^{T}r(f(x_{t})) \\
\le &\bar{\beta}_{T}\sum_{t=1}^{T}I(f(x_{t});y_{t}|\mathcal{D}_{t-1}) + L(\zeta_{T}^{1/2}+\sqrt{\frac{2}{\pi}}) \sum_{t=1}^{T}\sigma_{t-1}(x_t) + \sum_{t=1}^{T}B_{t} \\
=&\bar{\beta}_{T}I(y_{T};f_{T}) + L(\zeta_{T}^{1/2}+\sqrt{\frac{2}{\pi}}) \sum_{t=1}^{T}\sigma_{t-1}(x_t) + \sum_{t=1}^{T}B_{t},
\end{split}
\end{equation*}
where $\bar{\beta}_{T} \coloneq \max_{t\in[1,T]}\beta_{t}$.

When using GP to model $p(f(x)|\mathcal{D})$, assuming Gaussian likelihood $\mathcal{N}\sim(0,\sigma^{2})$ and the kernel $k(x,x')\le 1, \forall x, x' \in \mathcal{X}$, then $0\le \sigma^{2}(x)\le k(x,x) \le 1$, so we have 
\begin{equation*}
    \sigma_{t-1}^{2}(x_{t}) \le \frac{\log(1+\sigma^{-2}\sigma_{t-1}^{2}(x_{t}))}{\log(1+\sigma^{-2})}.
\end{equation*}

According to Lemma~\ref{lemma:total information gain}, we have 
\begin{equation*}
    \sum_{t=1}^{T}\sigma^{2}_{t-1}(x_t) \le \frac{2}{\log(1+\sigma^{-2})}I(y_{T};f_{T}).
\end{equation*}

By Cauchy-Schwarz inequality, we have 
\begin{equation*}
    \sum_{t=1}^{T}\sigma_{t-1}(x_t) \le \sqrt{T\sum_{t=1}^{T}\sigma^{2}_{t-1}(x_t)} \le \sqrt{T\frac{2I(y_{T};f_{T})}{\log(1+\sigma^{-2})}}.
\end{equation*}

Define $\rho_{T}$ be the maximum information gain at most T selected points. Then we have 
\begin{equation*}
\begin{split}
R_{T} \le& \bar{\beta}_{T}I(y_{T};f_{T}) + L(\zeta_{T}^{1/2}+\sqrt{\frac{2}{\pi}}) \sqrt{T\frac{2I(y_{T};f_{T})}{\log(1+\sigma^{-2})}} + \sum_{t=1}^{T}B_{t} \\
\le& \bar{\beta}_{T}\rho_{T} + L(\zeta_{T}^{1/2}+\sqrt{\frac{2}{\pi}}) \sqrt{\frac{2T\rho_{T}}{\log(1+\sigma^{-2})}} + \sum_{t=1}^{T}B_{t}.
\end{split}
\end{equation*}

\end{proof}

\section{Experimental Details}

\subsection{Discrete Sandbox}\label{appx:discrete sandbox}

The latent state is $s\in\{1,\dots,6\}$, and at each round the agent chooses an action $x\in\{1,\dots,4\}$ and observes a scalar measurement
$$
y_t | (s,x_t)\sim \mathcal{N}(\mu_{s,x_t},\sigma^2),\quad \sigma=0.2,
$$
where the mean table $\mu_{s,x_t}$ is constructed so that a single ``informative'' action yields well-separated likelihoods across states (average gap $\Delta \approx 1.5$), while the remaining actions are nearly indistinguishable ($\Delta \approx 0.05$). 

We define an energy-based heuristic
$$
h_{t}(y)=a \cdot(y-c)^2, \quad c=0,
$$
with $a\in [0.15,0.35]$, which acts as the observation constraint surrogate used in the theorem.

\subsection{1D GP Bandit}\label{appx:1d gp}

We consider 1D inputs $x \in [0,1]$ on a uniform grid of $N=200$ points. The unknown objective is
$$
f(x)=0.6\sin(3\pi x)+0.4\cos(5\pi x)+0.2x.
$$
Observations are noisy:
$$
y_t=f(x_t)+\epsilon_t, \quad \epsilon_t \sim \mathcal{N}(0,\sigma^2), \quad \sigma=0.05.
$$

We compute $I(f; (x,y) \mid \mathcal{D}_{t-1})$ in closed form from the GP predictive variance (mutual information for a Gaussian observation model). 

To isolate the effect of heuristic alignment (Assumption on bounded discrepancy), we use the regret surrogate
$$
h_t(y)=|y-y^{\ast}|+b_{t}(y-y^{\ast}),
$$
where $y^{\ast}$ denotes the best achievable value on the grid (computed from the ground-truth $f$). The aligned setting sets $b_t=0$. Misaligned settings use amplitude $b=2.0$ with schedules:
\begin{itemize}
    \item Constant bias: $b_t=b$.
    \item Slow exponential decay: $b_t=b\exp{(-0.02t)}$.
    \item Fast exponential decay: $b_t=b\exp{(-0.25t)}$.
\end{itemize}

\subsection{Constrained System Identification}\label{appendix:plume field}

\subsubsection{Plume Field Model and Parameters}

We consider the monitoring of a chemical plume field, where multiple plume sources generate plume particles that can be measured by sensors. 
The field function is represented by the rate of hits, defined as the average number of particles per unit time measured by the sensor at a certain location.

The rate of hits for a chemical plume source is given as:
\begin{equation*}
\begin{split}
    R_{\theta}(x)=\frac{R_{s}}{\log \frac{\gamma}{a}}&\exp(-\frac{\langle \theta-x,V \rangle}{2D})K_{0}(\frac{||\theta-x||_{2}}{\gamma}), \\
\end{split}
\end{equation*}
where $\theta$ is the location of the plume source,  $R_{s}$ is the rate at which the plume source releases the plume particles in the environment, $\gamma=\sqrt{D\tau/(1+\frac{||V||^2\tau}{4D})}$ is the average distance traveled by a plume particle in its lifetime, $a$ is the size of the sensor detecting plume particles, $V$ is the average wind velocity, $D$ is the diffusivity of the plume particles, and $K_{0}$ is the Bessel function of zeroth order.

The measurement $y$, \ie, the number of particles measured, is modeled as a Poisson random variable with $R_{\theta}( x)\Delta t$ as the rate parameter, which leads to a likelihood model as
\begin{equation*}
     L_{\theta}(y|x) = \frac{\exp(-R_{\theta}(x)\Delta t) (R_{\theta}(x)\Delta t)^y}{y!},
\end{equation*}
where $\Delta t$ is the time taken to obtain a measurement.

\noindent \textbf{General Field and Sensor Parameters:}
\begin{itemize}
    \item Grid Size: 100 × 100 units.
    \item Sensor Size ($a$): 1.0 units.
    \item Measurement Time ($\Delta t$): 1.0 seconds.
\end{itemize}
    
\noindent \textbf{Source Parameters:}
Eight potential plume sources were defined for the experiments. Their specific physical parameters are detailed as:
\begin{table}[htbp]
\begin{center}
\begin{small}
\begin{sc}
\begin{tabular}{cccccc}
\toprule
Source & $\theta$ & $R_{s}$ & $\gamma$ & $V$ & $D$ \\
\midrule
1 & [20.0, 20.0] & 100.0 & 50.0 & [0.5, 0.5] & 10 \\
2 & [30.0, 80.0] & 100.0 & 60.0 & [-0.3, 0.2] & 15 \\
3 & [45.0, 55.0] & 15.0 & 50.0 & [0.5, 0.5] & 10 \\
4 & [50.0, 50.0] & 18.0 & 30.0 & [-0.3, 0.2] & 15 \\
5 & [55.0, 45.0] & 16.0 & 40.0 & [0.2, -0.4] & 12 \\
6 & [48.0, 52.0] & 17.0 & 35.0 & [0.1, 0.1] & 11 \\
7 & [52.0, 55.0] & 14.0 & 45.0 & [-0.1, -0.1] & 13 \\
8 & [52.0, 52.0] & 18.0 & 40.0 & [0.1, -0.1] & 13 \\
\bottomrule
\end{tabular}
\end{sc}
\end{small}
\end{center}
\vskip -0.15in
\end{table}

\subsubsection{Task-Specific Configurations}

\noindent \textbf{Source Localization.}
\begin{itemize}
    \item Goal: Estimate the unknown location $\theta=[x,y]$ of a single active source.
    \item Ground Truth: Source 1 was configured as the single active source.
    \item Maximum Sensor Threshold ($y_{max}$): 60.0 hits/second.
    \item Hypothesis Space: A discrete grid of potential locations spanning [0,100]×[0,100] with a resolution of 5 units, resulting in a 20 × 20 grid of 400 hypotheses.
\end{itemize}

\noindent \textbf{Wind Estimation.}
\begin{itemize}
    \item Goal: Estimate the unknown wind vector $V=[v_x,v_y]$ for a single source with a known location.
    \item Ground Truth: Source 2 was used, with its true wind vector set to [-0.3, 0.2].
    \item Maximum Sensor Threshold ($y_{max}$): 60.0 hits/second.
    \item Hypothesis Space: A discrete grid of potential wind vectors spanning [-1,1]×[-1,1] with a resolution of 0.1 units, resulting in a 20 × 20 grid of 400 hypotheses.
\end{itemize}

\noindent \textbf{Active Source Identification.}
\begin{itemize}
    \item Goal: Identify the subset of active sources from a set of six potential sources with known locations.
    \item Ground Truth: Sources 3-8 were used, with Sources 3, 5, 6, and 8 were set as active.
    \item Maximum Sensor Threshold ($y_{max}$): 30.0 hits/second.
    \item Hypothesis Space: The set of all possible on/off combinations for the six sources. This is a discrete space with $2^6=64$ unique hypotheses.
\end{itemize}

\subsection{Composite Bayesian Optimization}\label{appendix:power grid}

\subsubsection{Heuristic Estimation through Preference Learning}
As in \citep{Lin2022PreferenceOutcomes}, we estimate the unknown preference function $g(y)$ by asking a decision-maker (DM) to express preferences over pairs of outcomes $(y_1, y_2)$. Let $z(y_1, y_2) \in \{1, 2\}$ indicate whether the DM preferred the first or second outcome offered.
Following \citep{Chu2005PreferenceProcesses}, we assume that the DM’s responses are distributed according to a \textit{probit} likelihood:
$
    L(z(y_1, y_2)=1|g(y_1),g(y_2))=\Phi(\frac{g(y_1)-g(y_2)}{\sqrt{2}\lambda}),
$
where $\lambda$ is a hyperparameter, and $\Phi$ is the standard normal CDF.

Following \cite{AIFBO-application-arxiv}, taking the joint exploration and exploitation of the preference function $g(y)$ into consideration, we can choose $h(y|\mathcal{D}_{t})$ as the EFE of the preference model, \ie, $h(y|\mathcal{D}_{t})=\text{EFE}(g(y))$, which naturally extends the acquisition function into a nested structure:
\begin{equation}\label{eqn:AF nested}
    \begin{split}
        \alpha(x|\mathcal{D}_{t}) = \beta I(f_{x};y|\mathcal{D}_{t}) +\mathbb{E}_{p(y| x, \mathcal{D}_{t})}[\gamma I(g_{y};z|\mathcal{D}_{t})+\mathbb{E}_{p(g_{y}|y,\mathcal{D}_{t})}g_y]],
    \end{split}
\end{equation}
where $x=[x_{1},x_{2}]$ are two jointly evaluated candidates, and $\gamma, \beta \ge 0$.

\subsubsection{Task-Specific Configurations}

\begin{table}[htbp]\label{tab:opf}
\begin{center}
\begin{small}
\begin{tabular}{l p{10cm}}
\toprule
Performance Metrics & Definition  \\
\midrule
Voltage Fairness & Measures the variance in bus voltages across the network; lower variance implies more equitable voltage delivery. \\
\hline
Total Cost & Combines capital expenditures for DER installation and operational costs related to reactive power support. \\
\hline
Priority Area Coverage & Quantifies the share of power delivered to high-priority buses, such as rural or underserved regions. \\
\hline
Resilience & Assesses the percentage of time that all bus voltages remain within safe operating limits under perturbations (\eg, load uncertainty or line outages). \\
\bottomrule
\end{tabular}
\end{small}
\end{center}
\end{table}

\textbf{Energy Resource Allocation.}
\begin{itemize}
    \item Goal: Identify deployment strategies for Distributed Energy Resources (DERs) in Optimal Power Flow (OPF) that align with implicit ethical preferences across multiple performance dimensions detailed in the following table. 
    \item Testbed: IEEE 5-bus network in pandapower library.
\end{itemize}

\textbf{Regret and Heuristic models.}
\begin{itemize}
    \item Ground Truth Preference: $g(y)=a^{\intercal}y$, where $a=[1,-1,2,1]$
    \item Constant Bias $h_{0}$: $h_{0}(y)=-a_{0}^{\intercal}y$, where $a_{0}=[1,-1,10,1]$
    \item Learnt $h_{1}$: Query according to \eqref{eqn:AF nested} with $\gamma = 1$
    \item Learnt $h_{2}$: Query according to \eqref{eqn:AF nested} with $\gamma = 10$
\end{itemize}

\end{document}